\newcommand*{\inlineequation}[2][]{%
  \begingroup
    \refstepcounter{equation}%
    \ifx\\#1\\%
    \else
      \label{#1}%
    \fi
    \relpenalty=10000 %
    \binoppenalty=10000 %
    \ensuremath{%
      #2%
    }%
    ~\@eqnnum
  \endgroup
}
\def\final{0}  
\def\iflong{\iffalse}
\newcommand{\jnote}[1]{{\color{red}[{Jeremiah: \bf #1}]\marginpar{\color{red}*}}}
\newcommand{\enote}[1]{{\color{green}[{\small Elena: \bf #1}]\marginpar{\color{red}*}}}
\newcommand{\tanote}[1]{{\color{purple}[\small {Tamalika: \bf #1}]\marginpar{\color{red}*}}}
\newcommand{\jnote}[1]{}
\newcommand{\enote}[1]{}
\newcommand{\tanote}[1]{}
\newcommand{\todo}[1]{}
\newtheorem*{rep@theorem}{\rep@title}
\newcommand{\newreptheorem}[2]{%
\newenvironment{rep#1}[1]{%
 \def\rep@title{\theoremref{##1} Restated}%
 \begin{rep@theorem}}%
 {\end{rep@theorem}}}
\newtheorem*{rep@lemma}{\rep@title}
\newcommand{\newreplemma}[2] {%
\newenvironment{rep#1}[1]{%
 \def\rep@title{\lemmaref{##1} Restated}%
 \begin{rep@lemma}}%
 {\end{rep@lemma}}}
\newtheorem{theorem}{Theorem}
\newtheorem{definition}{Definition}
\newtheorem{lemma}{Lemma}
\theoremstyle{definition}
\newtheorem*{remark}{Remark}
\newcommand{\E}[0]{\mathop{\bbE}\xspace}
\newcommand{\namedref}[2]{\hyperref[#2]{#1~\ref*{#2}}\xspace}
\newcommand{\lemmaref}[1]{\namedref{Lemma}{lem:#1}}
\newcommand{\theoremref}[1]{\namedref{Theorem}{thm:#1}}
\newcommand{\sectionref}[1]{\namedref{Section}{sec:#1}}
\newcommand{\subsectionref}[1]{\namedref{Subsection}{subsec:#1}}
\newcommand{\ie}{\text{i.e.}\xspace}
\newcommand{\etal}{\text{et al.}\xspace}
\newcommand{\nin}[0]{\ensuremath{\not\in}\xspace}
  \newcommand{\eps}[0]{\ensuremath{\varepsilon}}
  \let\epsilon\eps
  \newcommand{\cA}{\ensuremath{{\mathcal A}}\xspace}
  \newcommand{\cM}{\ensuremath{{\mathcal M}}\xspace}
  \newcommand{\bbC}{\ensuremath{{\mathbb C}}\xspace}
  \newcommand{\bbE}{\ensuremath{{\mathbb E}}\xspace}
  \newcommand{\bbR}{\ensuremath{{\mathbb R}}\xspace}
\newcommand\reallywidehat[1]{%
	\savestack{\tmpbox}{\stretchto{%
			\scaleto{%
				\scalerel*[\widthof{\ensuremath{#1}}]{\kern-.5pt\bigwedge\kern-.5pt}%
				{\rule[-\textheight/2]{1.5ex}{\textheight}}
			}{\textheight}%
		}{0.8ex}}%
	\stackon[1pt]{#1}{\tmpbox}%
}
\title{Differentially-Private Sublinear-Time Clustering}
\author{Jeremiah Blocki, Elena Grigorescu, Tamalika Mukherjee\\ 
Department of Computer Science, Purdue University.\\ \{jblocki, elena-g, tmukherj\}@purdue.edu
\thanks{ J. B. was supported in part by NSF CNS-1931443 and NSF CCF-1910659. E.G  was supported in part by NSF CCF-1910659 and NSF CCF-1910411.}
 }
\begin{document}

\maketitle

\begin{abstract}
    Clustering is an essential primitive in unsupervised machine learning.  We bring forth the problem of sublinear-time differentially-private clustering as a natural and well-motivated direction of research. 
    We combine the $k$-means and $k$-median sublinear-time results of Mishra et al. (SODA,  2001) and of Czumaj and Sohler (Rand. Struct. and Algorithms, 2007) with recent results on private clustering of Balcan et al. (ICML 2017), Gupta et al. (SODA, 2010) and Ghazi et al. (NeurIPS, 2020) to obtain sublinear-time private $k$-means and $k$-median algorithms via subsampling.
     We also investigate the privacy benefits of subsampling for group privacy. 
 
\end{abstract}

\section{Introduction}

Preserving privacy in data collection and distribution have long been a concern for industrial and governmental agencies, who are now rapidly adopting privacy standards and policies~\cite{kifer2020guidelines,appledp,prochlo,erlingsson2014rappor}. Differential privacy~\cite{DMNS06} is the gold standard of privacy protection. A randomized function computed on a database is {\em differentially private} if the distribution of the function's output does not change by much with the presence or absence of an individual record. While existing research mostly focuses on computing efficient polynomial-time differentially-private algorithms, in dealing with a large amount of data, even linear-time algorithms may be prohibitive in costs. Hence, algorithms that can quickly output approximately accurate solutions while preserving privacy are of great interest in real-world computations on large datasets (e.g., billions of Facebook or Google, or Microsoft users). 

\begin{definition}
A randomized algorithm $\cM$ taking as input a dataset $D$ is $(\epsilon,\delta)$-{\em differentially private} if for any two neighboring\footnote{Datasets $D$ and $D'$ are \emph{neighboring} if removing or adding one point in $D$ results in $D'$; alternatively, if changing one data point in $D$ results in $D'$.} data sets $D$ and $D'$, and for any subset $C$ of outputs of $\cM$ it holds that $\Pr[\cM(D) \in C] \leq e^\epsilon \cdot \Pr[\cM(D') \in C] +\delta$. If $\delta=0$, $\cM$ is $\epsilon$-{\em differentially private. }
\end{definition}

However, despite the fact that the literature on differentially private algorithms has grown rapidly in recent years, sublinear-time private algorithms for many natural problems are still lacking. In this work we focus on clustering problems and provide some basic sublinear-time private solutions derived from the existing efficient non-private analogues.
 

Clustering is an essential primitive in unsupervised machine learning. Since many machine learning models deal with sensitive data, private clustering has been studied extensively in the polynomial-time setting~\cite{NRS07,feldman2009private, GLMRT2010, su2016differentially, balcanDLMZ17a,KS18, huang2018optimal,shechner2020private, Ghazi2020DifferentiallyPC, stemmer2020locally, nissim2018clustering}. Two of the most widely studied variants of clustering are the $k$-median and $k$-means problem. In the $k$-median problem, we are given $n$ data points, and the goal is to find $k$ centers that minimize the sum of distances from the data points to their nearest centers. The setup is the same for $k$-means, except the goal is to find $k$ centers that minimize the sum of the squares of distances from the data points to their nearest centers. Both types of clustering are classical problems, and there is a rich field of research devoted to them in the non-private setting~\cite{AGKMMP01, charikar2002constant, har2004coresets, chen2006k, chen2008constant, awasthi2010stability, li2016approximating, ahmadian2019better, ostrovsky2013effectiveness}.

\subsection{Contributions}
 We bring forth the problem of sublinear-time private clustering as a natural and well-motivated direction of research, and show some basic results derived from the non-private analogues on subsampled data.
  We expect that our results will entice further interest in understanding the best privacy guarantees in sublinear clustering settings.

{\bfseries Private sublinear clustering.}
We combine the techniques of sublinear-time clustering algorithms from Mishra \etal~\cite{MOP11} and Czumaj \etal~\cite{Czumaj2007SublineartimeAA} with the private polynomial-time approximation clustering algorithms with a constant multiplicative factor of Balcan~\etal~\cite{balcanDLMZ17a}, Gupta et al.~\cite{GLMRT2010} and Ghazi et al.~\cite{ Ghazi2020DifferentiallyPC} to obtain private sublinear-time clustering algorithms for $k$-median and $k$-means clustering in metric spaces, as well as better approximation guarantees for the particular case of Euclidean space. 
To the best of our knowledge, these are the first \emph{sublinear-time differentially-private clustering} algorithms formalized in the privacy literature. 

Let $(V,d)$ be an arbitrary metric space. Given an input set $D \subseteq V$, the goal of the \emph{$k$-median} clustering problem is to find a set of centers (\ie a clustering) $\{c_1,\ldots,c_k\} \subseteq V$ such that the cost of clustering $\sum_{x \in D} \min_i d(x,c_i)$ is minimized. The goal of the \emph{$k$-means} clustering problem is to find a clustering $\{c_1,\ldots,c_k\} \subseteq V$ such that the cost $\sum_{x \in D} \min_i d^2(x,c_i)$ is minimized. An $(\alpha,\gamma)$-approximation algorithm for $k$-median (equivalently for $k$-means) takes as input a set $D$ (say), and outputs clustering $\hat{C}:=\{\hat{c_1},\ldots,\hat{c_k}\}$ such that $\sum_{x \in D} \min_i d(x,\hat{c}_i) \leq \alpha \cdot \sum_{x \in D} \min_i d(x,c_i)+ \gamma$, where ${c_1},\ldots,{c_k}$ is the optimum clustering for $D$.

We analyze the following sampling algorithm: pick a random sample from the input set; run a private $k$-median (or $k$-means) polynomial-time approximation algorithm on the random sample to obtain a $k$-median (or $k$-means) clustering of the sample; output this clustering. 
We show that for a small sample size, the average cost of the clustering induced by the random sample is not too far from the average cost of the optimum clustering of the input set. Our analysis closely follows the works of Mishra \etal~\cite{MOP11} and Czumaj~\etal~\cite{Czumaj2007SublineartimeAA}, who gave sublinear time algorithms for clustering in the non-private setting using a constant $\alpha$-approximation polynomial-time algorithm as a black-box.  
We extend their analysis to handle the case of using an $(\alpha,\gamma)$-approximation polynomial time algorithm as a black-box
\footnote{We note that an additive approximation factor $\gamma>0$ is unavoidable for any \emph{private} clustering algorithm, thus this extension was necessary. To see why, consider the following two multisets of input data points $D_1=\{x_1, \ldots, x_1, x_2, \ldots x_{k-1}, x_k\}$ and $D_2=\{x_1, \ldots, x_1, x_2, \ldots x_{k-1}, x_{k+1}\}$, where $x_1$ occurs in both sets $n-k+1$ times. Note that the optimal cost for $k$-median in both cases is zero, and in the non-private setting, the algorithm can simply output $\{x_1,\ldots,x_k\}$ for $D_1$ or $\{x_1,\ldots,x_{k+1}\}$ for $D_2$ as the solution. But a private algorithm must have an additive error since the set of centers computed by our algorithm cannot be affected by the change of replacing $x_k$ in $D_1$ by $x_{k+1}$ in $D_2$, and the input points being private, should not be revealed by our algorithm.}. 
The approximation guarantee achieved by our algorithm is essentially the same as that of the black-box private algorithms (modulo an extra additive factor of $\eta$)
\footnote{This extra additive factor of $\eta$ is unavoidable in order to design clustering algorithms with running time $o(n)$, see~\cite{Czumaj2007SublineartimeAA} for an exposition.}
For an arbitrary metric space $(V,d)$ consisting of $n$ points and input set $D \subseteq V$,
\begin{enumerate}
    \item Assuming a private $(\alpha,\gamma)$-factor approximation $k$-median algorithm, that runs in time $T(n)$, we can draw a sample $S \subseteq D$ of size $\text{poly}\left(\alpha, k \ln(n)\right)$ and obtain a $k$-median clustering $\hat{c}_S$ in time $T(s)$ such that with high probability ${\sf avg\text{-}cost}(\hat{c}_S) \leq \alpha \cdot {\sf avg\text{-}cost}({c}_D) + \gamma + \eta$, where $c_D$ is the optimum $k$-median clustering of $D$. 
    \item Assuming a private $(\alpha,\gamma)$-factor approximation $k$-means algorithm that runs in time $T(n)$, we can draw a sample $S \subseteq D$ of size $\text{poly}\left(\alpha, k \ln(n)\right)$ and obtain a $k$-means clustering $\hat{c}_S$ such that with high probability ${\sf avg\text{-}cost}(\hat{c}_S) \leq \alpha \cdot {\sf avg\text{-}cost}({c}_D) + \gamma + \eta$, where $c_D$ is the optimum $k$-means clustering of $D$. 
\end{enumerate}
For the special case of $k$-median in $d$-dimensional Euclidean space, we achieve a sample complexity that is independent of the size of the input set $D \subseteq \bbR^d$ consisting of $n$ points. 
\begin{enumerate}
\item Assuming a private $(\alpha,\gamma)$-factor approximation $k$-median algorithm, that runs in time $T(n)$, we can draw a sample $S \subseteq D$ of size $\text{poly}(\alpha, dk \ln(n))$ and obtain a $k$-median clustering $\hat{c}_S$ in time $T(s)$ such that with high probability ${\sf avg\text{-}cost}(\hat{c}_S) \leq \alpha \cdot {\sf avg\text{-}cost}(c_D) + \gamma + \eta$, where $c_D$ is the optimum $k$-median clustering of $D$.
\item Assuming a private $(\alpha,\gamma)$-factor approximation $k$-means algorithm, that runs in time $T(n)$, we can draw a sample $S \subseteq D$ of size $\text{poly}(\alpha, dk \ln (n))$ and obtain a $k$-means clustering $\hat{c}_S$ in time $T(s)$ such that with high probability ${\sf avg\text{-}cost}(\hat{c}_S) \leq \alpha \cdot {\sf avg\text{-}cost}(c_D) + \gamma + \eta$, where $c_D$ is the optimum $k$-means clustering of $D$. 
\end{enumerate}

{\bfseries Group privacy for sampling algorithms.}
Group privacy ensures that for pairs of inputs that differ on a small number of points, the privacy loss is still bounded. For example, in the setting of a health survey administered to families, a family may wish to preserve all its members' privacy. Any $\epsilon$-differentially private algorithm, ensures $(g \epsilon,0)$-privacy for groups of size $g$. We show that our random sampling algorithm has better group privacy guarantees.
In other words, an algorithm that runs an $(\eps,0)$-differentially private mechanism on a subsample is $(T \cdot \eps, \delta_T)$-differentially private for groups of size $g$, for $0 \leq T \leq g$, where $\delta_T$ is the probability of the number of samples from the $g$ elements is $>T$. We note that $\delta_T$ is often negligible even for $T\ll g$. In such cases, the guarantee of $(T \cdot \eps, \delta_T)$-differential privacy is arguably much stronger than the naive guarantee of $(g \eps,0)$-group privacy.

\subsection{Related Work} \label{subsec:rel-work}
Sublinear-time approximate $k$-median clustering of a space in which the diameter of points is bounded was introduced by Mishra \etal~\cite{MOP11}. They modeled clusterings as functions and studied the quality of $k$-median clusterings obtained by random sampling using computational learning theory techniques. For a metric space, their work shows that if we sample a set of size $\text{poly}(\alpha,k \ln n)$ and run an $\alpha$-approximation clustering algorithm on the sample, then with high probability, the set of centers outputted is at most $2 \alpha\cdot {\sf avg\text{-}cost}(c_{OPT})+ \eta$. Their sampling model was adapted by Czumaj \etal ~\cite{Czumaj2007SublineartimeAA}, who achieved a sample complexity that is independent of $n$ for the $k$-median clustering problem in arbitrary metric spaces. They also extended the random sampling model and their analysis to give sublinear-time results for clustering variants such as $k$-means and min-sum clustering. 

Private clustering was first studied by Gupta \etal~\cite{GLMRT2010}, and Feldman \etal~\cite{feldman2009private}. 
Gupta \etal~\cite{GLMRT2010} modified the local search algorithm for $k$-median by Arya \etal~\cite{AGKMMP01} to choose candidate centers in each iteration via the exponential mechanism~\cite{mcsherry2007mechanism} and produced a polynomial-time algorithm that achieves $(O(1),\Tilde{O}(k^2M))$-approximation ($M$ is the diameter of the space) in discrete spaces. However, their algorithm is highly inefficient in Euclidean space (see~\cite{KS18} for a detailed exposition).
A recent line of work has focused on producing an efficient polynomial time algorithm for clustering that achieves a constant (multiplicative) factor approximation in high-dimensional Euclidean space by adopting the techniques of Gupta \etal while maintaining efficiency~\cite{balcanDLMZ17a,KS18}. A different approach to private clustering was taken by~\cite{feldman2009private}. They gave an efficient algorithm for $k$-median and $k$-means in Euclidean space by introducing the notion of private coresets. A recent line of work has adopted their techniques to give clustering algorithms with better approximation guarantees and efficiency~\cite{feldman2017coresets, nissim2018clustering, Ghazi2020DifferentiallyPC}.

Privacy amplification by subsampling has been formally studied by Balle \etal~\cite{balle2018privacy}. Our result is a simple observation that tailors the privacy amplification achieved with respect to group privacy for a generic sampling algorithm that runs a private algorithm as a black-box in the sampling step.

\section{Private Sublinear time Approximate Clustering}\label{sec:priv-sla-cluster}
In this section we describe the generic random sampling algorithm $\cA'$ using a private $(\epsilon,\delta)$-differentially private as a black-box, and in the sequel, we show that $\cA'$ is $(\epsilon',\delta')$-differentially private where $\epsilon'$ and $\delta'$ are functions of $\epsilon,\delta$ (see \theoremref{privacy-approx-k-med}). 
Additionally, we give the accuracy of $\cA'$, \ie, the minimum sample size needed to guarantee that with high probability the approximate clustering cost of the sample $S$ will be close to the true clustering cost of the input set $D$ when $D$ is a subset of an arbitrary metric space (see \theoremref{metric-acc-k-med-sub}, \theoremref{metric-acc-k-mean-sub}) and in the special case of Euclidean space (see \theoremref{euclid-acc-k-med-sub}, \theoremref{euclid-acc-k-mean-sub}). 

\begin{remark}
 For the metric setting, both \cite{Czumaj2007SublineartimeAA} and \cite{MOP11} consider clusterings where the centers are a subset of the set of input data points (this type of clustering is known as discrete clustering). By carefully conditioning on this requirement, \cite{Czumaj2007SublineartimeAA} can  make the sample complexity independent of $n$. Unfortunately, due to privacy concerns, we must consider the set of chosen $k$ centers to be any subset of the entire metric space, and not restricted to the input set (this type of clustering is known as continuous clustering). Thus we cannot hope to achieve a sample complexity independent of $n$ in the metric setting, using their approach. 
 
 We present techniques used by \cite{MOP11} for our $k$-median clustering analysis and describe the techniques used by \cite{Czumaj2007SublineartimeAA} for our $k$-means clustering analysis. 
\end{remark}

 
\subsection{Generic Algorithm $\cA'$}
We first present the basic sampling algorithm we employ, this model was first introduced in \cite{MOP11}. Note that the sampling probability $\xi$ should be chosen as $o(1)$. 
\begin{algorithm}[h]
\caption{General Sampling Scheme $\cA'$}
\begin{algorithmic}
\STATE On input dataset $D$, and sampling probability parameter $\xi$
\STATE Sample each element of $D$ independently w.p. $\xi$ and let $S$ be the sample set.
\STATE Run $(\eps,\delta)$-DP $(\alpha,\gamma)$-approximation algorithm $\cA$ on $S$ to compute a set of private $k$-centers for $S$, denoted by $C^*$.
\STATE Output the clustering $C^*$.
\end{algorithmic}
\end{algorithm}




\subsection{Privacy of $\cA'$} \label{subsec:privacy-proof}
In this section we show that for an algorithm $\cA'(D)$ which takes $D$ as input and runs a $(\eps,\delta)$-differentially private algorithm $\cA$ on random sample $S \subseteq D$, it is the case that  $\cA'$ is $(\eps', \delta')$-differentially private. Many works prove something similar to the following, e.g.,~\cite{KLNRS08,li2012sampling, balle2018privacy}. We include the proof here for the sake of clarity and completeness~\footnote{Our proof slightly generalizes the analysis given by Adam Smith in his blog post~\cite{smith_2009}.}.
\begin{theorem}\label{thm:privacy-approx-k-med}
If $\cA$ is an $(\eps,\delta)$-differentially private algorithm, and algorithm $\cA'$ is the generic sampling algorithm defined above where each element is sampled independently with probability $\xi$, then $\cA'$ is $(\eps',\delta')$-differentially private, where $\eps ' = \ln \max\left\{ \xi(e^{\eps}-1)+1, (\xi(e^{-\eps}-1)+1)^{-1} \right\}$, and $\delta' = \max\{\frac{e^{-\eps} \delta \xi}{(\xi(e^{-\eps}-1)+1)}, \delta \xi \}$. \end{theorem}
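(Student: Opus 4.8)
The plan is to fix two neighboring datasets $D$ and $D'$ (differing by the removal/addition of one point $x$, say $D' = D \cup \{x\}$) and an arbitrary output event $C$, and to bound $\Pr[\cA'(D') \in C]$ in terms of $\Pr[\cA'(D) \in C]$ and vice versa. The key observation is that conditioning on whether or not $x$ lands in the subsample $S$ essentially reduces the problem to the privacy guarantee of $\cA$. First I would condition on $x \in S$: with probability $\xi$ this happens, and then the subsample drawn from $D'$ is distributed as $S_0 \cup \{x\}$ where $S_0$ is the subsample of $D \setminus \{x\} = D$ restricted to the common points; with probability $1-\xi$ the point $x$ is not sampled and the subsample of $D'$ is distributed identically to the subsample of $D$. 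So I would write, for the coupling where both runs use the same randomness on the shared points,
\[
\Pr[\cA'(D') \in C] = (1-\xi)\,\Pr[\cA(S) \in C] + \xi\,\Pr[\cA(S \cup \{x\}) \in C],
\]
and similarly $\Pr[\cA'(D) \in C] = \Pr[\cA(S) \in C]$ (here $S$ denotes the random subsample of $D$, and the probabilities are over both the subsampling and the internal coins of $\cA$). Since $S$ and $S \cup \{x\}$ are neighboring inputs for $\cA$, the $(\eps,\delta)$-DP guarantee gives $\Pr[\cA(S\cup\{x\}) \in C] \le e^{\eps}\Pr[\cA(S) \in C] + \delta$ in one direction and $\Pr[\cA(S\cup\{x\}) \in C] \ge e^{-\eps}(\Pr[\cA(S) \in C] - \delta)$ in the other.

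Next I would substitute these bounds into the identity above. For the upper bound on $\Pr[\cA'(D') \in C]$ in terms of $p := \Pr[\cA'(D)\in C] = \Pr[\cA(S)\in C]$, this yields
\[
\Pr[\cA'(D') \in C] \le (1-\xi)p + \xi(e^{\eps}p + \delta) = \big(\xi(e^{\eps}-1)+1\big)\,p + \xi\delta,
\]
which is the first term inside the max for $\eps'$, with additive term $\xi\delta$. For the reverse direction — bounding $\Pr[\cA'(D)\in C]$ in terms of $q := \Pr[\cA'(D')\in C]$ — I would solve the identity for $p$, using $\Pr[\cA(S\cup\{x\})\in C] \le e^{\eps}p + \delta$ again but now rearranged, giving $q \ge (1-\xi)p + \xi e^{-\eps}(p-\delta)$, hence $p \le \big(\xi(e^{-\eps}-1)+1\big)^{-1}(q + \xi e^{-\eps}\delta)$, which produces the second term in the max for $\eps'$ and the first term in the max for $\delta'$ after distributing. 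Taking the worse of the two directions and symmetrizing over which of $D,D'$ has the extra point gives exactly the stated $\eps'$ and $\delta'$.

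The main subtlety — and the step I would be most careful about — is the coupling argument that justifies writing $\Pr[\cA'(D')\in C]$ as a clean convex combination of $\Pr[\cA(S)\in C]$ and $\Pr[\cA(S\cup\{x\})\in C]$ with the \emph{same} random variable $S$ on both sides. This works because the subsampling of the $|D|$ shared points is done independently of the coin for $x$, so the conditional distribution of the shared part of the subsample is the same whether the input is $D$ or $D'$; one must state this independence explicitly and make sure the event $C$ depends only on the output of $\cA$, not on $S$ itself. A secondary point to handle cleanly is the algebra showing $(\xi(e^{-\eps}-1)+1)^{-1} \ge 1$ and that the two candidate values of $\eps'$ are the relevant ones (i.e. that no other neighboring-pair orientation gives a larger factor), and verifying that the $\delta'$ expression correctly tracks the additive slack through the division in the reverse direction. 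These are routine once the convex-combination identity is pinned down.
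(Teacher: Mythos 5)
Your proposal is correct and follows essentially the same route as the paper's proof: condition on whether the differing point $x$ lands in the subsample, write $\Pr[\cA'(D')\in C]$ as the convex combination $(1-\xi)\Pr[\cA(S)\in C]+\xi\,\Pr[\cA(S\cup\{x\})\in C]$ over the shared subsample $S$, apply the $(\eps,\delta)$-DP guarantee of $\cA$ pointwise in both directions, and rearrange to obtain exactly the stated $\eps'$ and $\delta'$. One cosmetic note: the lower bound $\Pr[\cA(S\cup\{x\})\in C]\ge e^{-\eps}\bigl(\Pr[\cA(S)\in C]-\delta\bigr)$ comes from rearranging the DP inequality with the roles of $S$ and $S\cup\{x\}$ swapped (which you state correctly at first), not from rearranging $\Pr[\cA(S\cup\{x\})\in C]\le e^{\eps}\Pr[\cA(S)\in C]+\delta$ as your second paragraph briefly suggests --- a wording slip only, not a gap.
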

Observe that if $\cA$ is $(\eps,\delta)$-DP, then trivially, $\cA'$ is also $(\eps,\delta)$-DP. The privacy bounds achieved in the above theorem are significantly better than these naive bounds. For example, if we consider $\eps=0.5,\ \xi=0.001$, for any $\delta \in [0,1)$, we achieve $\eps'<0.00065$, and $\delta'=0.001 \delta$, which is orders of magnitude smaller than $\eps$ and $\delta$.  

\begin{proof}
Let $D$ and $D'$ be neighboring data sets \ie $D'= D \cup \{x\}$, and let us fix any subset $C$ of all possible outcomes in the output space. 

 Let $S$ be the set sampled from $D$ and $S'$ be the set sampled from $D'$, where each element from $D$ is independently chosen to belong to $S$ w.p. $\xi$, and similarly, each element from $D'$ is selected in $S'$ w.p. $\xi$.
 Since $\cA$ is differentially private we have that for any valid subset of outcomes $C$, for all $y\nin S$,
 \begin{align}
\Pr[\cA(S\cup \{y\}) \in C]&\leq e^\eps \Pr[\cA(S) \in C]+\delta\;, \label{eq:dp-1}\\
\Pr[\cA(S) \in C]&\leq e^\eps \Pr[\cA(S\cup \{y\}) \in C]+\delta\;. \label{eq:dp-2}
 \end{align}

We will show that
$\Pr_{S'\leftarrow D'} [\cA(S')\in C] \leq e^{\eps'} \Pr_{S\leftarrow D} [\cA(S) \in C]+\delta',$ and $\Pr_{S\leftarrow D} [\cA(S)\in C] \leq e^{\eps'} \Pr_{S'\leftarrow D'} [\cA(S') \in C]+\delta',$ which shows that $\Pr[\cA'(D') \in C] \leq e^{\eps'}\Pr[\cA(D) \in C] + \delta',$ and $\Pr[\cA'(D) \in C] \leq e^{\eps'}\Pr[\cA'(D') \in C] + \delta',$ and hence $\cA'$ is differentially private.

Indeed, using eq.~(\ref{eq:dp-1}), we have
\begin{eqnarray*}
\Pr_{S'\leftarrow D'}[\cA(S')\in C]&=& \Pr[x\nin S'] \Pr[\cA(S')\in C ~| x\nin S']+\Pr[x\in S'] \Pr[\cA(S')\in C ~| x \in S']\\
&=& \left(1-\xi \right) \cdot \Pr_{S\leftarrow D}[\cA(S)\in C] + \xi \cdot \Pr[\cA(S')\in C ~| x \in S'] \\
&\leq& \left(1-\xi\right)\cdot \Pr_{S\leftarrow D}[\cA(S)\in C]+\xi\cdot  (e^{\eps} \Pr_{S\leftarrow D} [\cA(S)\in C] +\delta)\\
&=& (\xi(e^{\eps}-1)+1) \cdot \Pr_{S\leftarrow D}[\cA(S)\in C] +\delta \xi
\end{eqnarray*}

Now we want to lower bound $ \Pr_{S'\leftarrow D'}[\cA(S')\in C]$ using eq.~(\ref{eq:dp-2}),
\begin{eqnarray*}
\Pr_{S'\leftarrow D'}[\cA(S')\in C]&=& \Pr[x\nin S'] \Pr[\cA(S')\in C ~| x\nin S']+\\
&+&\Pr[x\in S'] \Pr[\cA(S')\in C ~| x \in S']\\
&=& \left(1-\xi\right) \cdot \Pr_{S\leftarrow D}[\cA(S)\in C] +\\
&+& \xi \cdot \Pr[\cA(S')\in C ~| x \in S'] \\
&\geq& \left(1-\xi\right)\cdot \Pr_{S\leftarrow D}[\cA(S)\in C]+\\
&+& \xi\cdot e^{-\eps}\cdot (\Pr_{S\leftarrow D} [\cA(S)\in C] -\delta)\\
&=& (\xi(e^{-\eps}-1)+1) \cdot \Pr_{S\leftarrow D}[\cA(S)\in C] -e^{-\eps}\delta \xi
\end{eqnarray*}

It follows that $$\Pr_{S \leftarrow D}[\cA(S)\in C] \leq \frac{ \Pr_{S' \leftarrow D'}[\cA(S')\in C]}{(\xi(e^{-\eps}-1)+1)} + \frac{e^{-\eps} \delta \xi}{(\xi(e^{-\eps}-1)+1)} $$

We can set $$\delta' = \max\{\frac{e^{-\eps} \delta \xi}{(\xi(e^{-\eps}-1)+1)}, \delta \xi\} $$ and $$\eps ' = \ln \max\left\{ \xi(e^{\eps}-1)+1, (\xi(e^{-\eps}-1)+1)^{-1} \right\}\;.$$


\end{proof}

\subsection{Private $k$-median clustering in Metric Space}\label{subsec:metric-k-med}

Our proof is nearly identical to that of~\cite{Czumaj2007SublineartimeAA}, except that we consider continuous clusterings in metric space (see Remark in the beginning of \sectionref{priv-sla-cluster}). For ease of representation and comparison, we also adopt the notation used in~\cite{Czumaj2007SublineartimeAA}, which we recall below. 

Let $(V,d)$ be a metric space and $D \subseteq V$ be the input set, and $M$ be the diameter of $V$. Let
$${\sf med_{avg}}(D,k) = \frac{1}{\vert D \vert} \min_{\substack{C \subseteq V \\  \vert C \vert = k}} \sum_{x \in D}d(x,C) \; ,$$
denote the average cost of an optimum $k$-median clustering of $D$. Similarly, for any subset $U \subseteq D$ and $C \subseteq V$, define the average cost of a $k$-median clustering $C$ as 
$$ {\sf cost^{med}_{avg}}(U,C) =\frac{1}{\vert U \vert} \sum_{v \in U} d(v,C) \;.$$

A set of $k$ centers $C$ is a $(\rho, \varphi)$-bad solution of the $k$-median of input set $D$ if ${\sf cost^{med}_{avg}}(D,C) > \rho {\sf med_{avg}}(D,k)+\varphi$. If $C$ is not a $(\rho, \varphi)$-bad solution then it is a $(\rho, \varphi)$-good solution. 

The analysis from~\cite{Czumaj2007SublineartimeAA} involves two main steps.
\begin{enumerate}
    \item (See \lemmaref{metric-good-med-czumaj}) If $\cA(S)$ outputs clustering $C^*$, then we need to show that for a chosen sample size, with high probability , 
    $$ \alpha \cdot {\sf cost^{med}_{avg}}(S,C^*) +\gamma \leq (\alpha+\beta) {\sf med_{avg}}(D,k) + \gamma \;.$$
    \item (See \lemmaref{metric-bad-med-czumaj}) If clustering $C_b \subseteq V$ is an $(\alpha+3\beta, \gamma)$-bad solution of input set $D$, \ie, ${\sf cost^{med}_{avg}}(D,C_b) > (\alpha + \beta) {\sf med_{avg}}(D,k)+\gamma$, then, we need to show that with high probability the clustering $C_b$ is also a bad solution for the sample set $S$, 
    $$ {\sf cost^{med}_{avg}}(S,C_b) > (\alpha+\beta)  {\sf med_{avg}}(D,k) + \gamma \;.$$
\end{enumerate}
From the above two statements we get that with high probability the clustering $C^*$ (outputted by $\cA(S)$) is an $(\alpha+\beta, \gamma)$-good solution of input set $D$, in other words, $ {\sf cost^{med}_{avg}}(D,C^*) \leq (\alpha+3\beta) {\sf med_{avg}}(D,k) + \gamma$. Putting everything together, we obtain the following lemma,

\begin{lemma} \label{lem:metric-accuracy-med-czumaj} \sloppy
Let $(V,d)$ be a metric space and $D \subseteq V$. Let $0< \theta < 1$, $\alpha \geq 1$, and $\eta>0$ be approximation parameters. Assuming $\cA$ is an $(\alpha,\gamma)$-approximation algorithm for $k$-median that runs in time $T(n)$, we can draw a sample $S$ of size $s$, 
\[s \geq  c \cdot \max \left\{ \frac{M \alpha (1 +\alpha ) \ln(1/\theta)}{\eta}, \frac{M^2}{\eta^2} \cdot \left( \ln(1/\theta)+k \ln n\right) \right\} \;,\] 
where $c$ is an appropriate positive constant, and obtain a $k$-median clustering $C^*$ in time $T(s)$ such that with probability at least $1-\theta$, $$ {\sf cost^{med}_{avg}}(D,C^*) \leq \alpha{\sf med_{avg}}(D,k)+\gamma+\eta$$
\end{lemma}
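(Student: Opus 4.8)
The statement is the ``putting everything together'' step, so the plan is to establish its two ingredients — the ``good'' bound (\lemmaref{metric-good-med-czumaj}) and the ``bad'' bound (\lemmaref{metric-bad-med-czumaj}) — and then combine them as indicated just before the statement. Throughout write $\mu := {\sf med_{avg}}(D,k)$, fix an optimum $k$-clustering $C_D \subseteq V$ of $D$ (so that ${\sf cost^{med}_{avg}}(D,C_D)=\mu$), and regard $S$ as a uniform random sample of $D$ of size $s$ (the independent-$\xi$ sampling inside $\cA'$ only randomizes $|S|$, which is immaterial once conditioned on). The slack parameter $\beta$ will be chosen of order $\eta/\mu$, so that $3\beta\mu \le \eta$; this choice is legitimate only after a case distinction on the magnitude of $\mu$ — if $\mu$ is below a threshold of order $\eta$ the target bound $\alpha\mu+\gamma+\eta$ is reached by a purely additive argument, while if $\mu$ is above it the multiplicative slack $3\beta\mu$ really is of size $\le\eta$ — and that split is what produces the two branches of the $\max$ in the sample-size bound.

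For \lemmaref{metric-good-med-czumaj} I only need to control one clustering, namely $C_D$. I would apply a Bernstein-type tail bound to $\tfrac1s\sum_{v\in S}d(v,C_D)$: each summand lies in $[0,M]$ and, crucially, $\tfrac1{|D|}\sum_{v\in D}d(v,C_D)^2 \le M\cdot\tfrac1{|D|}\sum_{v\in D}d(v,C_D) = M\mu$, so the relevant variance proxy is $M\mu$ rather than $M^2$; this is what turns a would-be $M^2/\eta^2$ dependence into the first, $M\alpha(1+\alpha)/\eta$, branch in the ``$\mu$ large'' regime. The conclusion is that with probability $\ge 1-\theta/2$ one has ${\sf cost^{med}_{avg}}(S,C_D)\le(1+\beta/\alpha)\mu$, hence ${\sf med_{avg}}(S,k)\le(1+\beta/\alpha)\mu$, and then running $\cA$ on $S$ gives ${\sf cost^{med}_{avg}}(S,C^*)\le\alpha\,{\sf med_{avg}}(S,k)+\gamma\le(\alpha+\beta)\mu+\gamma$ on that event.

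For \lemmaref{metric-bad-med-czumaj} I would use a union bound. Fix any $k$-clustering $C_b\subseteq V$ with ${\sf cost^{med}_{avg}}(D,C_b)>(\alpha+3\beta)\mu+\gamma$; since $d(v,C_b)\in[0,M]$ for every $v$, Hoeffding bounds the probability that the sample average dips below $(\alpha+\beta)\mu+\gamma$ — a downward deviation of $2\beta\mu=\Theta(\eta)$ in the relevant regime — by $\exp(-\Omega(s\eta^2/M^2))$. Because $V$ has only $n$ points there are at most $\binom nk\le n^k$ choices of $C_b$, so a union bound (which is precisely where the $k\ln n$ enters, and the reason the sample size cannot be made $n$-independent here, \cf the Remark) shows that with probability $\ge 1-\theta/2$ every such $C_b$ satisfies ${\sf cost^{med}_{avg}}(S,C_b)>(\alpha+\beta)\mu+\gamma$.

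Intersecting the two events (probability $\ge 1-\theta$) completes the argument: by the first event $C^*$ has sample cost $\le(\alpha+\beta)\mu+\gamma$, and by the second no clustering with ${\sf cost^{med}_{avg}}(D,\cdot)>(\alpha+3\beta)\mu+\gamma$ can have sample cost that small; hence ${\sf cost^{med}_{avg}}(D,C^*)\le(\alpha+3\beta)\mu+\gamma\le\alpha\mu+\gamma+\eta$, and the running time is $T(s)$ since $\cA$ is invoked once on $S$. I expect the main obstacle to be the slack accounting rather than the concentration steps per se: one must keep the multiplicative factor at $\alpha+\Theta(\beta)$ rather than letting it inflate to $\Theta(\alpha)$, prevent the $\eta$-sized additive error from acquiring factors of $\alpha$, and handle the $\mu$-dependent choice of $\beta$ (and the attendant small-versus-large case split on ${\sf med_{avg}}(D,k)$) cleanly enough that both branches of the stated sample bound fall out.
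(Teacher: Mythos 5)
Your proposal is correct and follows essentially the same route as the paper: the same good/bad decomposition via \lemmaref{metric-good-med-czumaj} and \lemmaref{metric-bad-med-czumaj} (a mean-proportional Chernoff--Bernstein bound for the optimal clustering, an additive Hoeffding bound plus a union bound over the at most $n^k$ candidate center sets for the bad clusterings), intersected and finished by choosing $\beta$ of order $\eta/{\sf med_{avg}}(D,k)$ with the same case split on ${\sf med_{avg}}(D,k)$ versus $\eta$ to eliminate the $\mu$-dependence from the sample size. The only deviations are cosmetic (e.g.\ your direct additive Hoeffding step in the bad-clustering bound is a slightly cleaner equivalent of the paper's relative-deviation computation).
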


\begin{lemma}\label{lem:metric-good-med-czumaj}
Let $S$ be a set of size $s$ chosen from $D \subseteq V$ i.u.r. For 
$$s \geq  \frac{3 M \alpha (\beta+\alpha) \ln(1/\theta)}{\beta^2 {\sf med_{avg}}(D,k)} \;. $$   
If an $(\alpha, \gamma)$-approximation algorithm for $k$-median $\cA$ is run on input $S$, then the following holds for the solution $C^*$ returned by $\cA$: 
$$ \Pr[{\sf cost^{med}_{avg}}(S,C^*) \leq (\alpha+\beta)  {\sf med_{avg}}(D,k) +\gamma ] \geq 1-\theta \;.$$
\end{lemma}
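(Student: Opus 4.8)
The plan is to combine the approximation guarantee of the black-box algorithm $\cA$ with a one-sided concentration bound showing that the optimal clustering of $D$ is, with high probability, almost as cheap on the random sample $S$ as it is on $D$. Concretely, the statement reduces to controlling ${\sf med_{avg}}(S,k)$ in terms of ${\sf med_{avg}}(D,k)$.

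First I would invoke the hypothesis that $\cA$ is an $(\alpha,\gamma)$-approximation algorithm: run on $S$, it returns a clustering $C^*$ with ${\sf cost^{med}_{avg}}(S,C^*) \le \alpha \cdot {\sf med_{avg}}(S,k) + \gamma$ (this follows by dividing the total-cost approximation guarantee by $|S| = s$; the additive term only shrinks, so using $\gamma$ is safe). Hence it suffices to prove that, except with probability $\theta$, $\alpha\cdot{\sf med_{avg}}(S,k) \le (\alpha+\beta)\,{\sf med_{avg}}(D,k)$, i.e. ${\sf med_{avg}}(S,k) \le (1+\beta/\alpha)\,{\sf med_{avg}}(D,k)$.

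For this, let $C_{\mathrm{OPT}} \subseteq V$, $\vert C_{\mathrm{OPT}} \vert = k$, be a fixed optimal $k$-median clustering of $D$, so ${\sf cost^{med}_{avg}}(D,C_{\mathrm{OPT}}) = {\sf med_{avg}}(D,k) =: \mu$. Since $C_{\mathrm{OPT}}$ is a feasible $k$-clustering for $S$, monotonicity of the optimum gives ${\sf med_{avg}}(S,k) \le {\sf cost^{med}_{avg}}(S,C_{\mathrm{OPT}}) = \frac{1}{s}\sum_{v\in S} d(v,C_{\mathrm{OPT}})$. Each sampled point $v$ is uniform in $D$, so $\E[d(v,C_{\mathrm{OPT}})] = \frac{1}{\vert D\vert}\sum_{x\in D}d(x,C_{\mathrm{OPT}}) = \mu$, and the $s$ summands are independent (or, for sampling without replacement, negatively associated) and lie in $[0,M]$. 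Applying a multiplicative Chernoff bound to the normalized variables $d(v,C_{\mathrm{OPT}})/M \in [0,1]$ with deviation parameter $t = \beta/\alpha$ yields $\Pr\bigl[{\sf cost^{med}_{avg}}(S,C_{\mathrm{OPT}}) > (1+\beta/\alpha)\mu\bigr] \le \exp\!\bigl(-\tfrac{\beta^2 \mu s}{3\alpha^2 M}\bigr)$, and plugging in $s \ge \frac{3M\alpha(\alpha+\beta)\ln(1/\theta)}{\beta^2\mu} \ge \frac{3M\alpha^2\ln(1/\theta)}{\beta^2\mu}$ makes the right-hand side at most $\theta$. Chaining this with the reduction of the previous paragraph gives ${\sf cost^{med}_{avg}}(S,C^*) \le (\alpha+\beta)\,{\sf med_{avg}}(D,k) + \gamma$ with probability at least $1-\theta$.

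I expect the only delicate point is the concentration step: one has to pick the deviation $t = \beta/\alpha$ so that $\alpha$ times the sampled optimum lands below $(\alpha+\beta)$ times the true optimum, and one should be mildly careful when $\beta/\alpha > 1$, replacing the $e^{-t^2\mu_X/3}$ form of the Chernoff bound by the $e^{-t\mu_X/3}$ form (still of the stated order in $s$). Everything else is immediate. I would also note explicitly that the sample-size threshold involves the unknown quantity ${\sf med_{avg}}(D,k)$; this is acceptable because \lemmaref{metric-good-med-czumaj} is an internal building block, and that dependence is eliminated once it is combined with \lemmaref{metric-bad-med-czumaj} to establish \lemmaref{metric-accuracy-med-czumaj}.
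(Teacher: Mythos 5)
Your proposal is correct and follows essentially the same route as the paper: fix an optimal clustering $C_{\mathrm{OPT}}$ of $D$, note that the sampled distances to $C_{\mathrm{OPT}}$ are bounded by $M$ with mean ${\sf med_{avg}}(D,k)$, apply a Chernoff/Hoeffding bound with deviation $\beta/\alpha$ (including the $\min\{\beta/\alpha,(\beta/\alpha)^2\}$ caveat, which the paper handles the same way), and then chain through ${\sf med_{avg}}(S,k)\le{\sf cost^{med}_{avg}}(S,C_{\mathrm{OPT}})$ and the $(\alpha,\gamma)$-guarantee of $\cA$. Your write-up merely makes explicit the final chaining and the $\gamma$ versus $\gamma/s$ point that the paper leaves implicit.
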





\begin{proof}
Let $C_{OPT}$ denote an optimal $k$-median solution for input set $D$. 
For $1 \leq i \leq s$, define random variables $X_i$ as the distance of the $i$-th point in $S$ to the nearest center of $C_{OPT}$. Then ${\sf cost^{med}_{avg}}(S,C_{OPT})=\frac{1}{s} \sum_{1 \leq i \leq s}X_i$. 
Observe that, $\E[X_i] = \sum_{x \in D} \Pr[x \text{ is sampled u.a.r. from } D] \cdot d(x, C_{OPT}) = \frac{1}{\vert D \vert } \sum_{x \in D} d(x, C_{OPT})= {\sf med_{avg}}(D,k)\;,$ also ${\sf med_{avg}}(D,k) = \frac{1}{s} \E[\sum_{1 \leq i \leq s} X_i]$. 

\begin{align*}
    \Pr\left[{\sf cost^{med}_{avg}}(S,C_{OPT}) > \left(1+\frac{\beta}{\alpha}\right) {\sf med_{avg}}(D,k) \right] &= \Pr\left[ \sum_{1 \leq i \leq s} X_i > \left(1+\frac{\beta}{\alpha}  \right) \E[\sum_{1 \leq i \leq s} X_i]\right]
\end{align*}
Each $0 \leq X_i \leq M$. Thus we can apply a Hoeffding bound, 
\begin{align*}
    &\Pr\left[ \sum_{1 \leq i \leq s} X_i > \left(1+\frac{\beta}{\alpha}  \right) \E[\sum_{1 \leq i \leq s} X_i]\right]
    \leq \exp \left( -\frac{s}{3M} \cdot {\sf med_{avg}}(D,k) \min\{(\beta/\alpha),(\beta/\alpha)^2\} \right)\\
\end{align*}
Choosing $s$ as in the lemma statement, the probability above is bounded by $\theta$. 
Since $\cA$ is an $(\alpha,\gamma)$-approximation, the lemma statement follows.
\end{proof}

Next, we need to show that any clustering $C_b$ that is a $(\alpha+3\beta,\gamma)$-bad solution of $k$-median of $D$ satisfies ${\sf cost^{med}_{avg}}(S,C_b) > (\alpha+\beta)  {\sf med_{avg}}(D,k) +\gamma$ with high probability. 

\begin{lemma}\label{lem:metric-bad-med-czumaj}
Let $S$ be a set of $s$ points chosen i.u.r. from $D \subseteq V$ such that 
$$ s \geq \frac{M^2}{2\beta^2 ({\sf med_{avg}}(D,k))^2} \cdot (\ln(1/\theta)+k \ln n )$$
Let $\bbC$ be the set of $(\alpha+3\beta,\gamma)$-bad solutions of a $k$-median clustering of $D$. Then 
$$ \Pr[\exists\ C_b \in \bbC : {\sf cost^{med}_{avg}}(S,C_b) \leq (\alpha+\beta) {\sf med_{avg}}(D,k) +\gamma ] \leq \theta$$
\end{lemma}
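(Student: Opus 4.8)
The plan is to fix a single bad solution $C_b$, show it is unlikely to look good on the sample, and then take a union bound over a suitably fine net of bad solutions. For a fixed $C_b$, define random variables $Y_i$ as the distance from the $i$-th sampled point to the nearest center of $C_b$, so that ${\sf cost^{med}_{avg}}(S,C_b) = \frac{1}{s}\sum_i Y_i$, each $0 \le Y_i \le M$, and $\E[Y_i] = {\sf cost^{med}_{avg}}(D,C_b) > (\alpha+3\beta){\sf med_{avg}}(D,k)+\gamma$ since $C_b$ is a bad solution. We want the empirical average to stay above $(\alpha+\beta){\sf med_{avg}}(D,k)+\gamma$; the gap between the true mean and this threshold is at least $2\beta\,{\sf med_{avg}}(D,k)$. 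A Hoeffding bound (lower tail) with this additive deviation gives a failure probability at most $\exp\!\left(-\frac{2 s \beta^2 ({\sf med_{avg}}(D,k))^2}{M^2}\right)$ for that fixed $C_b$.

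The subtlety is that $\bbC$ is an infinite set (continuous clustering: centers range over all of $V$, possibly an infinite metric space of $n$ points — here $|V| = n$, so actually the relevant point is $|V|=n$), so I cannot union-bound directly over $\bbC$. The standard fix, which I expect mirrors~\cite{Czumaj2007SublineartimeAA}, is to observe that the cost function ${\sf cost^{med}_{avg}}(S,C)$ depends on $C$ only through the induced partition / nearest-center assignment, and there are at most $n^k$ distinct choices of a $k$-element center set from the $n$-point space $V$ (or, if one wants to be careful about continuous clustering, one passes to a net: for each bad solution there is a nearby ``canonical'' bad solution from a set of size $\le n^k$ whose cost on both $S$ and $D$ is within $\beta\,{\sf med_{avg}}(D,k)$, using the $2\beta$ slack to absorb the discretization). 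Either way, one reduces to a union bound over at most $n^k$ events. With $s$ as in the lemma statement, $\exp\!\left(-\frac{2s\beta^2({\sf med_{avg}}(D,k))^2}{M^2}\right) \le \exp(-\ln(1/\theta) - k\ln n) = \theta\, n^{-k}$, so the union bound over the $n^k$ candidates yields total failure probability at most $\theta$, which is exactly the claim.

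The main obstacle is the discretization step for continuous clustering: unlike the discrete setting of~\cite{Czumaj2007SublineartimeAA}, centers are arbitrary points of $V$, so I must argue carefully that it suffices to control a finite family. Since the excerpt states $|V| = n$ ("metric space $(V,d)$ consisting of $n$ points"), this is actually benign — every $k$-subset of $V$ is one of $\binom{n}{k} \le n^k$ sets, and "bad solution" ranges over a subset of these — so the union bound is genuinely over $\le n^k$ events and no net argument is needed; the $k\ln n$ term in $s$ is precisely $\ln(n^k)$. I would therefore: (1) fix $C_b \in \bbC$ and set up the $Y_i$; (2) apply the additive Hoeffding lower-tail bound with deviation $2\beta\,{\sf med_{avg}}(D,k)$; (3) plug in the stated $s$ to get per-solution failure $\le \theta n^{-k}$; (4) union-bound over the at most $n^k$ elements of $\bbC$ to conclude.
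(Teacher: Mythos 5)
Your proposal is correct and takes essentially the same route as the paper's proof: fix a bad solution $C_b$, apply a Hoeffding lower-tail bound to the sampled distances (each in $[0,M]$) to obtain per-solution failure probability at most $\exp\bigl(-2s\beta^2({\sf med_{avg}}(D,k))^2/M^2\bigr)$, and union-bound over the at most $\binom{n}{k}\le n^k$ candidate center sets of the $n$-point space $V$ (no net is needed in the metric case, exactly as you conclude; nets only enter in the Euclidean version). The only cosmetic difference is that the paper reaches the same exponent via a multiplicative-deviation bound combined with ${\sf cost^{med}_{avg}}(D,C_b)\le M$, whereas you use the additive deviation $2\beta\,{\sf med_{avg}}(D,k)$ directly, which cleanly absorbs the $\gamma$ term.
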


\begin{proof}
Consider an arbitrary $C_b\in \bbC$, and define $X_i$ as the distance of the $i$th point in $S$ from the nearest center in $C_b$. Since $C_b$ is a $(\alpha+3\beta,\gamma)$-bad solutions of a $k$-median of $D$, by definition, 
\begin{align}\label{eq:cost-med-metric-bad}
    {\sf cost^{med}_{avg}}(D,C_b) &> (\alpha + 3\beta) {\sf med_{avg}}(D,k)+\gamma
\end{align}
Now for $1 \leq i \leq s$, we have that $\E[X_i] = \frac{1}{\vert D \vert} \sum_{x \in D} d(x,C_b) = {\sf cost^{med}_{avg}}(D,C_b) $, thus 
\begin{align}\label{eq:exi-metric-med}
    \E[X_i] &> (\alpha + 3\beta) {\sf med_{avg}}(D,k)+\gamma
\end{align}
Also, 
\begin{align}\label{eq:sumxi-metric-med}
\sum_{1 \leq i \leq s} X_i = \sum_{x \in S} d(x,C_b) = s \cdot {\sf cost^{med}_{avg}}(S,C_b)\;,     
\end{align}

and $\E[\sum_{1 \leq i \leq s} X_i] = s \E[X_i]$ for any $i$.

We want to show that for any fixed $C_b \in \bbC$, $\Pr[ {\sf cost^{med}_{avg}}(S,C_b) \leq (\alpha+\beta) {\sf med_{avg}}(D,k) +\gamma]$ is low, and then take a union bound over the entire space of $\bbC$. 
\begin{align*}
    &\Pr[ {\sf cost^{med}_{avg}}(S,C_b) \leq (\alpha+\beta) {\sf med_{avg}}(D,k) +\gamma] \\
    &\text{Substituting Relation \ref{eq:sumxi-metric-med} on LHS and Relation \ref{eq:exi-metric-med} on RHS,}\\
    &=\Pr\left[ \frac{1}{s} \cdot \sum_{1 \leq i \leq s} X_i \leq \frac{(\alpha+\beta)}{(\alpha+3\beta)} \E[X_i] +\gamma\cdot \left(1-\frac{(\alpha+\beta)}{(\alpha+3\beta)}\right)\right] \\
    &=\Pr\left[\sum_{1 \leq i \leq s} X_i \leq \frac{(\alpha+\beta)}{(\alpha+3\beta)}\cdot s \cdot \E[X_i] + \frac{2s\gamma \beta}{\alpha+3\beta}\right] \\
    &=\Pr\left[\sum_{1 \leq i \leq s} X_i \leq \frac{(\alpha+\beta)}{(\alpha+3\beta)}\cdot \E[\sum_{1 \leq i \leq s} X_i] +\frac{2s\gamma \beta}{\alpha+3\beta}\right] \\
    &=\Pr\left[\sum_{1 \leq i \leq s} X_i \leq \left(\frac{(\alpha+\beta)}{(\alpha+3\beta)}  +\frac{2s\gamma \beta}{(\alpha+3\beta) \E[\sum_{1 \leq i \leq s} X_i] } \right)\cdot \E[\sum_{1 \leq i \leq s} X_i]\right] \\
    &=\Pr\left[\sum_{1 \leq i \leq s} X_i \leq \left(1- \left(\frac{2\beta}{(\alpha+3\beta)} -\frac{2s\gamma \beta}{(\alpha+3\beta) \cdot s \cdot {\sf cost_{avg}}(D,C_b) } \right)\right)\cdot \E[\sum_{1 \leq i \leq s}X_i]\right]\\    
\end{align*}
Since $0 \leq X_i \leq M$, we can apply a Hoeffding bound, 
\begin{align*}
    &\Pr[ {\sf cost^{med}_{avg}}(S,C_b) \leq (\alpha+\beta) {\sf med_{avg}}(D,k) +\gamma] \\
    &\leq \exp\left( - \frac{\E [\sum_{1 \leq i \leq s} X_i]}{2M} \cdot \left(  \frac{2\beta}{(\alpha+3\beta)} -\frac{2\gamma \beta}{(\alpha+3\beta) \cdot {\sf cost^{med}_{avg}}(D,C_b) } \right)^2 \right)\\
    &= \exp\left( - \frac{ s \cdot {\sf cost^{med}_{avg}}(D,C_b) }{2M} \cdot \left(  \frac{2\beta \cdot  {\sf cost^{med}_{avg}}(D,C_b) - 2\gamma \beta}{(\alpha+3\beta) \cdot {\sf cost^{med}_{avg}}(D,C_b) } \right)^2 \right)\\
    &= \exp\left( - \frac{2 s \beta^2 }{M \cdot (\alpha+3\beta)^2} \cdot \frac{ ({\sf cost^{med}_{avg}}(D,C_b) - \gamma)^2}{ {\sf cost^{med}_{avg}}(D,C_b) }  \right)\\
    &\leq \exp\left( - \frac{2 s \beta^2 }{M \cdot (\alpha+3\beta)^2} \cdot \frac{ ({\sf med_{avg}}(D,k))^2(\alpha+3\beta)^2}{ {\sf cost^{med}_{avg}}(D,C_b) }  \right),\ &\text{Applying relation \ref{eq:cost-med-metric-bad}}\\
\end{align*}
Now, ${\sf cost^{med}_{avg}}(D,C_b) \leq M$, therefore 
\begin{align*}
    \Pr[ {\sf cost^{med}_{avg}}(S,C_b) \leq (\alpha+\beta) {\sf med_{avg}}(D,k) +\gamma]  & \leq \exp\left( - \frac{ 2s \beta^2 }{M^2} \cdot { ({\sf med_{avg}}(D,k))^2}  \right)
\end{align*}
By union bound and using the fact that $\vert \bbC \vert \leq \binom{n}{k} \leq n^k $, 
\begin{align*}
    &\Pr[\exists\ C_b \in \bbC : {\sf cost^{med}_{avg}}(S,C_b) \leq (\alpha+\beta) {\sf med_{avg}}(D,k) +\gamma ] \\
    &\leq n^k  \cdot \exp\left( - \frac{ 2s \beta^2 }{M^2} \cdot { ({\sf med_{avg}}(D,k))^2}  \right)
\end{align*}
We choose
$$ s \geq \frac{M^2}{2\beta^2 ({\sf med_{avg}}(D,k))^2} \cdot (\ln(1/\theta)+k \ln n )$$
\end{proof}

The proof of \lemmaref{metric-accuracy-med-czumaj} is presented below. 

\begin{proof}
Let $\beta^*$ be a positive parameter that will be fixed later. Recall from \lemmaref{metric-good-med-czumaj}, the sample complexity is as follows, 
\begin{align}
    s \geq  \frac{3 M \alpha (\beta+\alpha) \ln(1/\theta)}{\beta^2 {\sf med_{avg}}(D,k)}  \;,
\end{align}
And from \lemmaref{metric-bad-med-czumaj} we have, 
\begin{align}
    s \geq \frac{M^2}{2{\beta}^2 ({\sf med_{avg}}(D,k))^2} \cdot (\ln(1/\theta)+k \ln n ) \;,
\end{align}
 Let $s$ be chosen such that sample complexity prerequisites of both \lemmaref{metric-good-med-czumaj} and \lemmaref{metric-bad-med-czumaj} hold with $\beta$ replaced with $\beta^*$.
\begin{align}
     s \geq \max \left\{ \frac{3 M \alpha (\beta^*+\alpha) \ln(1/\theta)}{{\beta^*}^2 {\sf med_{avg}}(D,k)},  \frac{M^2}{{2\beta^*}^2 ({\sf med_{avg}}(D,k))^2} \cdot \left( \ln(1/\theta)+k \ln n \right) \right\}
\end{align}
For the chosen sample complexity, we have from \lemmaref{metric-bad-med-czumaj} that with probability at least $1-\theta$, \emph{no} clustering $C\subseteq V$ that is a $(\alpha+3\beta^*,\gamma)$-bad solution of a $k$-median of $D$ satisfies the inequality ${\sf cost^{med}_{avg}}(S,C) \leq (\alpha+\beta^*) {\sf med_{avg}}(D,k) +\gamma$.

On the other hand, if we run algorithm $\cA(S)$, then by \lemmaref{metric-good-med-czumaj}, the resulting clustering $C^*$ with probability at least $1-\theta$ satisfies, ${\sf cost^{med}_{avg}}(S,C^*) \leq (\alpha+\beta^*)  {\sf med_{avg}}(D,k) +\gamma $.

Thus with probability at least $1-2\theta$, the clustering $C^*$ must be a $(\alpha+3\beta^*,\gamma)$-good solution of a $k$-median of $D$, in other words, 
\begin{align}
    \Pr[   {\sf cost^{med}_{avg}}(D,C^*) \leq (\alpha+3\beta^*) {\sf med_{avg}}(D,k) +\gamma] \geq 1-2\theta \;.
\end{align}
To complete the proof, we must remove the dependency on ${\sf med_{avg}}(D,k)$ in the sample complexity. 


\begin{itemize}
    \item Case 1: ${\sf med_{avg}}(D,k) < \eta.$ Choose $\beta^*:=(\eta/3)/ {\sf med_{avg}}(D,k)$, therefore $\beta^*\geq 1/3$ and $\beta=1/(3\beta^*)< 1$ then we get that if sample complexity $$s \geq  c \cdot \max \left\{ \frac{M \alpha (1 +\alpha \beta) \ln(1/\theta)}{\eta},  \frac{M^2}{\eta^2} \cdot \left( \ln(1/\theta)+k \ln n\right) \right\} \;,$$
    where $c$ is a certain positive constant, then with probability $1-2\theta$, $${\sf cost^{med}_{avg}}(D,C^*) \leq   (\alpha+3\beta^*) {\sf med_{avg}}(D,k) +\gamma \leq \alpha \cdot  {\sf med_{avg}}(D,k) + \gamma +\eta\;,$$ and
    \item Case 2: ${\sf med_{avg}}(D,k) \geq \eta.$ Choose $\beta^*=\eta/(3 \cdot {\sf med_{avg}}(D,k)) \leq 1/3$. Then, we get that if sample complexity $$s \geq  c \cdot \max \left\{ \frac{ M \alpha (1+\alpha) \ln(1/\theta)}{\eta},  \frac{M^2}{\eta^2} \cdot \left( \ln(1/\delta)+k \ln n\right) \right\} \;,$$
    where $c$ is a certain positive constant, then with probability $1-2\theta$, $${\sf cost^{med}_{avg}}(D,C^*) \leq (\alpha+3\beta^*) {\sf med_{avg}}(D,k) +\gamma \leq \alpha \cdot  {\sf med_{avg}}(D,k) + \gamma  + \eta\;,$$ 
\end{itemize}
\end{proof}

Given a metric space $(V,d)$ of $n$ points with diameter $M$, a private set $D \subseteq V$, Gupta \etal~\cite{GLMRT2010} modify a non-private local clustering algorithm~\cite{AGKMMP01} for solving $k$-median to make it differentially-private. Their algorithm starts off with an arbitrary set of $k$-centers and in each iteration, it swaps out an existing center in the set with a better center using the exponential mechanism, and after a sufficient number of steps, the algorithm chooses a good solution from amongst the ones seen so far. They obtain the following accuracy guarantee for their private algorithm.

\begin{theorem} \label{thm:glmrt2010}\cite{GLMRT2010}
Given a metric space $(V,d)$ of $n$ points with diameter $M$, a set $D \subseteq V$, there exists a $\eps$-differentially private $k$-median algorithm that except with probability $O(1/poly(n))$ outputs a $(6,O(M k^2 \log^2(n/\eps) ))$-approximation of a $k$-median clustering of $D$. 
\end{theorem}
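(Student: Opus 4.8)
The statement is precisely the accuracy guarantee of the private local‑search $k$-median algorithm of Gupta \etal~\cite{GLMRT2010}, which privatizes the single‑swap local search of Arya \etal~\cite{AGKMMP01}, so the plan is to reconstruct that argument. Write $\mathrm{cost}(D,C)=\sum_{x\in D}d(x,C)$ and let $\mathrm{OPT}$ be the minimum of $\mathrm{cost}(D,C)$ over all $C\subseteq V$ with $|C|=k$. The algorithm maintains a set of $k$ centers $S_t$; from an arbitrary $S_0$ it runs $T$ rounds, and in round $t$ it applies the exponential mechanism~\cite{mcsherry2007mechanism} over the $\le kn$ candidate swaps $(a\to b)$ with $a\in S_t,\ b\in V\setminus S_t$, scored by the cost reduction $q_D(S_t,a,b)=\mathrm{cost}(D,S_t)-\mathrm{cost}(D,S_t-a+b)$, and sets $S_{t+1}=S_t-a+b$. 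Finally it reports, via one more exponential‑mechanism (report‑noisy‑max) step, the cheapest set among $S_1,\dots,S_T$.

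For privacy: adding or deleting one point of $D$ shifts $\mathrm{cost}(D,C)$ by at most $M$ for every fixed $C$, so each score $q_D(\cdot,a,b)$ and the final score $-\mathrm{cost}(D,\cdot)$ has sensitivity at most $M$. Running each of the $T+1$ exponential‑mechanism invocations with parameter $\epsilon/(T+1)$ and applying basic composition yields $\epsilon$-differential privacy.

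For accuracy I would combine three ingredients. First, a single exponential‑mechanism step with sensitivity $M$ and parameter $\epsilon/(T+1)$ over $\le kn$ options returns, except with probability $\beta$, a swap whose cost reduction is within $\Lambda:=O\!\big(\tfrac{MT}{\epsilon}\log(kn/\beta)\big)$ of the best available reduction; take $\beta=1/\mathrm{poly}(n)$ and union‑bound over all $T+1$ steps. Second, the Arya \etal{} gain lemma: if $\mathrm{cost}(D,S_t)>5\,\mathrm{OPT}$ then there are $k$ ``test swaps'' whose reductions sum to at least $\mathrm{cost}(D,S_t)-5\,\mathrm{OPT}$, so the best single swap reduces the cost by at least $\tfrac1k\big(\mathrm{cost}(D,S_t)-5\,\mathrm{OPT}\big)$. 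Hence as long as $\mathrm{cost}(D,S_t)-5\,\mathrm{OPT}$ exceeds $\Theta(k\Lambda)$, the swap chosen by the exponential mechanism still makes progress and $\mathrm{cost}(D,S_t)-5\,\mathrm{OPT}$ shrinks by a factor $1-\Omega(1/k)$ up to an additive $\Lambda$; iterating this recurrence, $T=O(k\log n)$ rounds drive some visited $S_t$ to $\mathrm{cost}(D,S_t)\le 5\,\mathrm{OPT}+O(k\Lambda)$. Third, the closing report‑noisy‑max step outputs a set of cost within $O\!\big(\tfrac{M}{\epsilon}\log(T\cdot\mathrm{poly}(n))\big)$ of $\min_t\mathrm{cost}(D,S_t)$ — this step is essential because the exponential‑mechanism swaps are not monotone, so a late round may increase the cost and we cannot simply output $S_T$. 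Plugging $T=O(k\log n)$ and $\beta=1/\mathrm{poly}(n)$ gives $\Lambda=O\!\big(\tfrac{Mk}{\epsilon}\log^2(n/\epsilon)\big)$ and hence additive error $O(k\Lambda)$, which with these parameters is $\widetilde O\big(Mk^2\log^2(n/\epsilon)\big)$ as in \cite{GLMRT2010}, while inflating the multiplicative constant from $5$ to $6$ absorbs the constant‑factor and lower‑order cross terms; the total failure probability is a union bound of $T+1$ events each of probability $1/\mathrm{poly}(n)$.

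I expect the main obstacle to be the coupled tuning of $T$ and the per‑round privacy budget: the per‑round exponential‑mechanism error $\Lambda$ grows linearly in $T$, the gain lemma divides the useful signal by $k$, and $\Theta(k\log n)$ rounds are needed for the geometric recurrence to converge, so $T$, $\Lambda$ and the convergence rate are mutually constraining; resolving this optimization — together with re‑deriving the Arya \etal{} analysis for an \emph{approximate} local optimum that tolerates $\Lambda$ slack on every test swap — is exactly what produces the $k^2$ and $\log^2$ factors. A secondary technical point is the non‑monotonicity of the cost sequence, which forces the final snapshot‑selection step rather than returning the last iterate.
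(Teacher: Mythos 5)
This theorem is stated in the paper only as an imported result cited from~\cite{GLMRT2010}; the paper gives no proof of its own, and your proposal is essentially a faithful reconstruction of the original Gupta~\etal argument (privatized Arya~\etal single-swap local search via the exponential mechanism with budget split over $T=O(k\log n)$ rounds, the gain lemma giving per-round progress $(\mathrm{cost}-5\,\mathrm{OPT})/k$ up to the exponential-mechanism slack, and a final noisy selection over the visited iterates, yielding the $6$-multiplicative and $O(Mk^2\log^2 n/\eps)$-additive bounds). So the proposal is correct and takes the same route as the cited source; the only cosmetic point is that scoring swaps by the cost difference has sensitivity up to $2M$ rather than $M$ (Gupta~\etal score by $-\mathrm{cost}$ directly), which only affects constants.
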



We will use the algorithm in \cite{GLMRT2010} as our black-box algorithm $\cA$. By plugging in the approximation guarantees for $\cA$ into our \lemmaref{metric-accuracy-med-czumaj}, we get the following accuracy guarantee for our algorithm $\cA'$.

\begin{theorem}[Accuracy of $\cA'$]
\label{thm:metric-acc-k-med-sub}
Let $\eta>0,\ 0<\theta<1$ be approximation parameters. For an arbitrary metric space $(V,d)$ of $n$ points with diameter $M$, and a private set of points $D \subseteq V$, given the $\eps$-DP $(6,O(M k^2 \log^2(n/\eps) ))$-approximation $k$-median algorithm (from~\cite{GLMRT2010}), 
we have a $\eps'$-DP algorithm $\cA'$ (as defined in \theoremref{privacy-approx-k-med}) that can draw a sample $S \subseteq D$ of size $s$, 
$$ s =c \cdot \max \Big\{ \frac{M \ln (1/\theta)}{\eta}, \left( \frac{M}{\eta} \right)^2 \left( k \ln n + \ln \frac{1}{\theta} \right)\Big\}$$ where $c$ is an appropriate constant, and obtain a $k$-median clustering $C^*$ such that with probability at least $1-\theta$,
$${\sf cost^{med}_{avg}}(D,C^*) \leq 6{\sf med_{avg}}(D,k)+O(M k^2 \log^2(n/\eps))+\eta \;.$$
\end{theorem}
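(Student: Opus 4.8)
\textbf{Proof proposal for Theorem~\ref{thm:metric-acc-k-med-sub}.}
The plan is to obtain this theorem as a direct instantiation of two black-box results already established above: \lemmaref{metric-accuracy-med-czumaj} supplies the accuracy of the generic sampling scheme $\cA'$ in terms of the approximation parameters $(\alpha,\gamma)$ of whatever private clustering algorithm $\cA$ we plug in, and \theoremref{privacy-approx-k-med} supplies its privacy. So the proof is almost entirely a matter of substituting the parameters of the Gupta et al. algorithm (\theoremref{glmrt2010}) into these two statements and checking that the sample-complexity expression collapses to the stated form. I do not expect any genuinely new argument to be needed; the only real content is bookkeeping with the constants and with the failure probabilities of a \emph{randomized} black box.

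For privacy: the algorithm of \theoremref{glmrt2010} is $\eps$-differentially private, i.e.\ $(\eps,0)$-DP. Since $\cA'$ runs this algorithm on a subsample of $D$ where each point is kept independently with probability $\xi$, \theoremref{privacy-approx-k-med} applies with $\delta=0$, giving $\delta'=0$ and $\eps' = \ln\max\{\xi(e^{\eps}-1)+1,\ (\xi(e^{-\eps}-1)+1)^{-1}\}$. This is exactly the $\eps'$ referenced in the statement, so $\cA'$ is $\eps'$-DP; moreover $\eps' \le \eps$, so in the worst case nothing is lost, and for small $\xi$ it is much smaller.

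For accuracy: I apply \lemmaref{metric-accuracy-med-czumaj} with $\alpha = 6$ and $\gamma = O(M k^2 \log^2(n/\eps))$, the approximation parameters of \theoremref{glmrt2010}, and with the same $\eta$ and $\theta$. The lemma's sample bound is
\[
s \;\ge\; c\cdot \max\left\{ \frac{M\alpha(1+\alpha)\ln(1/\theta)}{\eta},\ \frac{M^2}{\eta^2}\bigl(\ln(1/\theta)+k\ln n\bigr)\right\}.
\]
With $\alpha=6$ the factor $\alpha(1+\alpha)=42$ is an absolute constant, so the first term becomes $O\!\bigl(M\ln(1/\theta)/\eta\bigr)$, and folding the constant $42$ into $c$ yields exactly the displayed bound $s = c\cdot\max\{M\ln(1/\theta)/\eta,\ (M/\eta)^2(k\ln n + \ln(1/\theta))\}$. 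The lemma's conclusion ${\sf cost^{med}_{avg}}(D,C^*) \le \alpha\,{\sf med_{avg}}(D,k)+\gamma+\eta$ then reads ${\sf cost^{med}_{avg}}(D,C^*) \le 6\,{\sf med_{avg}}(D,k) + O(M k^2\log^2(n/\eps)) + \eta$, which is the claim, and the running time to produce $C^*$ is $T(s)=\mathrm{poly}(s)$.

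The one subtlety to handle carefully — and the closest thing to an obstacle — is that \theoremref{glmrt2010} only guarantees its $(6,O(Mk^2\log^2(n/\eps)))$-approximation \emph{except with probability $O(1/\mathrm{poly}(n))$}, whereas \lemmaref{metric-good-med-czumaj} (used inside \lemmaref{metric-accuracy-med-czumaj}) treats $\cA$ as a guaranteed $(\alpha,\gamma)$-approximation. I resolve this by conditioning on the success event of the black box: the overall failure event of $\cA'$ is contained in the union of (i) the event that the Hoeffding concentration in \lemmaref{metric-good-med-czumaj} or \lemmaref{metric-bad-med-czumaj} fails, of probability at most $2\theta$, and (ii) the black-box failure event, of probability $O(1/\mathrm{poly}(n))$. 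Since $O(1/\mathrm{poly}(n))$ can be absorbed into $\theta$ (or, if one prefers an exact constant, the loss is at most a constant factor in the success probability, compensated by enlarging $c$), the final success probability is at least $1-\theta$, as claimed.
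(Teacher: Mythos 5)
Your proposal matches the paper's own treatment: the paper gives no separate proof of this theorem, obtaining it exactly as you do by plugging $\alpha=6$ and $\gamma=O(Mk^2\log^2(n/\eps))$ from \theoremref{glmrt2010} into \lemmaref{metric-accuracy-med-czumaj}, with privacy coming from \theoremref{privacy-approx-k-med}. Your additional remark about absorbing the black box's $O(1/\mathrm{poly}(n))$ failure probability into $\theta$ is a correct (and slightly more careful) handling of a point the paper leaves implicit.
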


\subsection{Private $k$-median clustering in Euclidean Space}\label{subsec:euclid-k-med}
In this setting, we consider input set $D \subseteq \bbR^d$ with diameter $M$, and $\vert D \vert = n$. We use the same notation as introduced in \subsectionref{metric-k-med}, keeping in mind that now both the input set $D$ and clusterings $C$ are subsets of $\bbR^d$. The techniques are very similar to the metric space setting and we only highlight the major differences in the sequel. We first present the main lemma of this section. 
\begin{lemma} \label{lem:euclid-accuracy-med-czumaj} \sloppy
Let $D \subseteq \bbR^d$ with diameter $M$. Let $0< \theta < 1$, $\alpha \geq 1$, and $\eta>0$ be approximation parameters. Assuming $\cA$ is an $(\alpha,\gamma)$-approximation algorithm for $k$-median that runs in time $T(n)$, we can draw a sample $S$ of size $s$, 
\[s \geq  c \cdot \max \left\{ \frac{M \alpha (1 +\alpha ) \ln(1/\theta)}{\eta}, \frac{M^2}{\eta^2} \cdot \left(\ln(1/\delta)+kd \ln \left(\frac{\sqrt{d}M}{2 \eta} \right)\right) \right\} \;,\] 
where $c$ is an appropriate positive constant, and obtain a $k$-median clustering $C^*$ in time $T(s)$ such that with probability at least $1-\theta$, $$ {\sf cost^{med}_{avg}}(D,C^*) \leq \alpha{\sf med_{avg}}(D,k)+\gamma+\eta$$
\end{lemma}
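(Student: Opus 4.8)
The plan is to follow the two-step structure behind the proof of \lemmaref{metric-accuracy-med-czumaj}, reusing the ``good solution'' step essentially verbatim and replacing the ``bad solution'' step by a net argument tailored to continuous clusterings in $\bbR^d$. Concretely, I would first prove the exact analog of \lemmaref{metric-good-med-czumaj}: letting $C_{OPT}$ be an optimal (continuous) $k$-median clustering of $D$ and $X_i$ the distance from the $i$-th sampled point to the nearest center of $C_{OPT}$, we have $X_i \in [0,M]$ and $\E[X_i] = {\sf med_{avg}}(D,k)$, so a Hoeffding bound gives ${\sf cost^{med}_{avg}}(S, C_{OPT}) \le (1 + \beta/\alpha)\,{\sf med_{avg}}(D,k)$ with probability $\ge 1-\theta$ as soon as $s \gtrsim M\alpha(\alpha+\beta)\ln(1/\theta)/(\beta^2 {\sf med_{avg}}(D,k))$; since $\cA$ is an $(\alpha,\gamma)$-approximation, its output $C^*$ then satisfies ${\sf cost^{med}_{avg}}(S,C^*) \le (\alpha+\beta){\sf med_{avg}}(D,k)+\gamma$ with the same probability. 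Nothing here uses the ambient structure, so this step carries over unchanged.

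The substantive change is in the analog of \lemmaref{metric-bad-med-czumaj}, where the union bound over the $\le n^k$ discrete clusterings is unavailable because centers may now be arbitrary points of $\bbR^d$. I would discretize: since $D$ has diameter $M$, a clustering with a center far from $D$ is $(\rho,\varphi)$-bad on both $D$ and $S$ for any reasonable $\rho,\varphi$, so it suffices to handle clusterings whose centers all lie in a bounding box of side $O(M)$ containing $D$. Overlay this box with an axis-parallel grid of side length $\ell$; then every point of the box lies within Euclidean distance $\sqrt{d}\,\ell/2$ of a grid point, and the grid has at most $\big(O(M/\ell)\big)^{d}$ points, hence at most $\big(O(M/\ell)\big)^{dk}$ $k$-clusterings are supported on it. Taking $\ell = \Theta(\eta/\sqrt{d})$ makes the covering radius $\Theta(\eta)$ and the logarithm of the number of grid $k$-clusterings at most $dk\ln\!\big(O(\sqrt{d}\,M/\eta)\big)$, which is precisely the geometric term in the claimed sample complexity.

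To run the argument I would use that moving one center by Euclidean distance $\tau$ changes $d(x,C)$ by at most $\tau$ for every $x$ (triangle inequality), hence perturbs both ${\sf cost^{med}_{avg}}(D,C)$ and ${\sf cost^{med}_{avg}}(S,C)$ by at most $\tau$. Calibrating $\ell$ so the covering radius is at most $\beta^*{\sf med_{avg}}(D,k)$: given an $(\alpha + 4\beta^*, \gamma)$-bad clustering $C_b$ of $D$, its nearest grid clustering $\tilde C_b$ is still $(\alpha+3\beta^*,\gamma)$-bad on $D$, and ${\sf cost^{med}_{avg}}(S, C_b) \ge {\sf cost^{med}_{avg}}(S, \tilde C_b) - \beta^*{\sf med_{avg}}(D,k)$. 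So it suffices to apply the Hoeffding bound exactly as in \lemmaref{metric-bad-med-czumaj} to each of the finitely many grid clusterings $\tilde C_b$ (with $X_i = d(x_i, \tilde C_b) \in [0,M]$ and $\E[X_i] = {\sf cost^{med}_{avg}}(D,\tilde C_b)$) and union-bound over them; this yields ${\sf cost^{med}_{avg}}(S,C_b) > (\alpha+\beta^*){\sf med_{avg}}(D,k)+\gamma$ for all bad $C_b$ simultaneously with probability $\ge 1-\theta$, provided $s \gtrsim \frac{M^2}{(\beta^*)^2({\sf med_{avg}}(D,k))^2}\big(\ln(1/\theta) + dk\ln(\sqrt{d}\,M/\eta)\big)$. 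Finally I would combine the two steps as in the proof of \lemmaref{metric-accuracy-med-czumaj}: for the chosen $s$, with probability $\ge 1-2\theta$ the clustering $C^*$ returned by $\cA(S)$ is $(\alpha+4\beta^*,\gamma)$-good on $D$, and choosing $\beta^* = \Theta(\eta/{\sf med_{avg}}(D,k))$ — which in both cases of the dichotomy ${\sf med_{avg}}(D,k) < \eta$ versus ${\sf med_{avg}}(D,k) \ge \eta$ yields $\beta^*{\sf med_{avg}}(D,k) = \Theta(\eta)$, so one grid of side $\Theta(\eta/\sqrt d)$ suffices throughout — removes the dependence on ${\sf med_{avg}}(D,k)$ and produces the stated bound (after rescaling $\theta$ and $\eta$ by absolute constants).

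I expect the net construction and its bookkeeping to be the main obstacle: the grid side length $\ell$ must be small enough that snapping centers to it perturbs costs by $\lesssim \beta^*{\sf med_{avg}}(D,k)$ (so badness is preserved and the two appeals to concentration compose), yet large enough that the number of grid $k$-clusterings has logarithm $O(dk\ln(\sqrt d M/\eta))$; reconciling these, together with checking that $\beta^*{\sf med_{avg}}(D,k) = \Theta(\eta)$ in \emph{both} regimes of the final case analysis, is where the constants have to be threaded carefully. Everything else is a direct re-run of the metric argument of \sectionref{priv-sla-cluster}.
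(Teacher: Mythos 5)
Your proposal is correct and follows essentially the same route as the paper: the good-solution step is the unchanged Lemma~\ref{lem:euclid-good-med-czumaj}, the bad-solution step replaces the $n^k$ union bound by a grid/$\eta$-net of the bounding region of size roughly $\left(\sqrt{d}M/(2\eta)\right)^{kd}$ exactly as in Lemma~\ref{lem:bad-med-euclid-czumaj}, and the concluding case analysis on ${\sf med_{avg}}(D,k)$ versus $\eta$ mirrors the proof of Lemma~\ref{lem:metric-accuracy-med-czumaj}. If anything, you spell out more than the paper does (the explicit snapping/triangle-inequality bookkeeping and the $(\alpha+4\beta^*)$ versus $(\alpha+3\beta^*)$ slack, where the paper simply bounds $\vert\bbC\vert$ by the net size and cites \cite{MOP11,Czumaj2007SublineartimeAA}), and your quick dismissal of centers far from $D$ is no weaker than the paper's own implicit restriction of candidate centers to a bounded region, which is already presupposed by its use of $0\le X_i\le M$ in the Hoeffding step.
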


Our strategy for proving \lemmaref{euclid-accuracy-med-czumaj} is identical to the strategy used in the metric setting. In fact the following statement and proof is identical to \lemmaref{metric-good-med-czumaj} which shows that the clustering outputted by the $\cA$ is also a ``good'' solution for the entire input set $D$.

\begin{lemma}\label{lem:euclid-good-med-czumaj}
Let $S$ be a set of size $s$ chosen from $D \subseteq \bbR^d$ i.u.r. For 
$$s \geq  \frac{3 M \alpha (\beta+\alpha) \ln(1/\theta)}{\beta^2 {\sf med_{avg}}(D,k)} \;. $$   
If an $(\alpha, \gamma)$-approximation algorithm for $k$-median $\cA$ is run on input $S$, then the following holds for the solution $C^*$ returned by $\cA$: 
$$ \Pr[{\sf cost^{med}_{avg}}(S,C^*) \leq (\alpha+\beta)  {\sf med_{avg}}(D,k) +\gamma ] \geq 1-\theta \;.$$
\end{lemma}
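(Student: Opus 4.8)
The plan is to reuse, essentially verbatim, the proof of \lemmaref{metric-good-med-czumaj}, since that argument never used any property of the metric beyond boundedness of distances by $M$ and the fact that centers may be chosen from the ambient space; in particular it transfers to the Euclidean setting without change. First I would fix $C_{OPT}$, an optimal (continuous) $k$-median clustering of $D$, and for $1 \le i \le s$ define $X_i$ to be the distance from the $i$-th sampled point of $S$ to its nearest center in $C_{OPT}$. Since the $s$ points of $S$ are drawn i.u.r.\ from $D$, the $X_i$ are i.i.d., take values in $[0,M]$, satisfy $\E[X_i] = \frac{1}{|D|}\sum_{x\in D} d(x,C_{OPT}) = {\sf med_{avg}}(D,k)$, and obey ${\sf cost^{med}_{avg}}(S,C_{OPT}) = \frac{1}{s}\sum_i X_i$.

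Next I would apply a relative (multiplicative) Hoeffding/Chernoff-type tail bound to the upper deviation of $\sum_i X_i$ from its mean $s\cdot {\sf med_{avg}}(D,k)$, namely $\Pr\!\left[\sum_i X_i > (1+\beta/\alpha)\,\E[\sum_i X_i]\right] \le \exp\!\left(-\tfrac{s}{3M}\,{\sf med_{avg}}(D,k)\,\min\{\beta/\alpha,(\beta/\alpha)^2\}\right)$. Substituting $s \ge \frac{3M\alpha(\beta+\alpha)\ln(1/\theta)}{\beta^2\,{\sf med_{avg}}(D,k)}$ drives the right-hand side to at most $\theta$, so with probability at least $1-\theta$ we get ${\sf cost^{med}_{avg}}(S,C_{OPT}) \le (1+\beta/\alpha)\,{\sf med_{avg}}(D,k)$. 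Finally I would invoke the $(\alpha,\gamma)$-approximation guarantee of $\cA$ on input $S$: the returned clustering $C^*$ satisfies ${\sf cost^{med}_{avg}}(S,C^*) \le \alpha\cdot{\sf cost^{med}_{avg}}(S,C_{OPT}) + \gamma$ (since $C_{OPT}$ is one feasible $k$-clustering of $S$, its average cost on $S$ upper-bounds the optimal cost for $S$ that $\cA$ competes against), which on the good event yields ${\sf cost^{med}_{avg}}(S,C^*) \le (\alpha+\beta)\,{\sf med_{avg}}(D,k) + \gamma$, as required.

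I do not expect a real obstacle. The only two points that warrant care are: (i) one must use the relative form of the Hoeffding/Chernoff bound for bounded variables rather than the additive form, so that the tail exponent scales with $\min\{\beta/\alpha,(\beta/\alpha)^2\}$ and matches the stated sample size; and (ii) when chaining the approximation guarantee, the comparison is against the optimum clustering \emph{of $S$}, not of $D$ — but $C_{OPT}$ is a valid $k$-clustering of $S$, so our deviation bound on ${\sf cost^{med}_{avg}}(S,C_{OPT})$ already dominates that optimum. Notably the dimension $d$ plays no role in this lemma; it enters only in the Euclidean analogue of \lemmaref{metric-bad-med-czumaj}, where the union bound over $(\alpha+3\beta,\gamma)$-bad solutions must be replaced by a net/discretization argument over $\bbR^d$, replacing the $n^k$ factor by roughly $\big(\sqrt{d}M/\eta\big)^{kd}$.
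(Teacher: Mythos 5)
Your proof is correct and is exactly the paper's argument: the paper itself states that the proof of this Euclidean lemma is identical to that of \lemmaref{metric-good-med-czumaj}, which is precisely the route you take (i.i.d.\ bounded variables $X_i$ with mean ${\sf med_{avg}}(D,k)$, the multiplicative Hoeffding/Chernoff bound with exponent scaling as $\min\{\beta/\alpha,(\beta/\alpha)^2\}$, then chaining the $(\alpha,\gamma)$-approximation guarantee via $C_{OPT}$ viewed as a feasible clustering of $S$). Your closing remarks — that the dimension $d$ only enters through the net-based union bound in the companion ``bad solutions'' lemma — also match the paper's treatment.
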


Next, we need to show that any clustering $C_b$ that is an $(\alpha+3\beta,\gamma)$-bad solution of $k$-median of $D$ satisfies with high probability ${\sf cost^{med}_{avg}}(S,C_b) > (\alpha+2\beta)  {\sf med_{avg}}(D,k) +\gamma  $. The proof below is identical to \lemmaref{metric-bad-med-czumaj}, except we use $\eta$-nets to approximate the size of the set of bad solutions denoted by $\bbC$ (see~\cite{MOP11,Czumaj2007SublineartimeAA}).

\begin{lemma}\label{lem:bad-med-euclid-czumaj}
Let $S$ be a set of $s$ points chosen i.u.r. from $D \subseteq \bbR^d$ such that 
$$ s \geq  \frac{M^2}{2\beta^2 ({\sf med_{avg}}(D,k))^2} \cdot \left(\ln(1/\delta)+kd \ln \left(\frac{\sqrt{d}M}{2 \eta} \right)\right) $$
Let $\bbC$ be the set of $(\alpha+3\beta,\gamma)$-bad solutions of a $k$-median clustering of $D$. Then 
$$ \Pr[\exists\ C_b \in \bbC : {\sf cost^{med}_{avg}}(S,C_b) \leq (\alpha+\beta) {\sf med_{avg}}(D,k) +\gamma ] \leq \delta$$
\end{lemma}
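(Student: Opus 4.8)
The plan is to follow the proof of \lemmaref{metric-bad-med-czumaj} essentially line by line, with the one structural change forced by working in $\bbR^d$: since the $k$ centers may now be placed anywhere in $\bbR^d$, the family $\bbC$ of $(\alpha+3\beta,\gamma)$-bad $k$-tuples is infinite, so the union bound over $|\bbC|\le n^k$ is no longer available and must be replaced by a union bound over a finite net followed by a rounding step (the $\eta$-net technique of \cite{MOP11,Czumaj2007SublineartimeAA}). Concretely, fix an axis-parallel box $B\supseteq D$ of side length $M$ (it exists since $\mathrm{diam}(D)\le M$) and let $\cN\subseteq B$ be the grid of spacing $2\nu/\sqrt{d}$, where $\nu$ is a parameter I will take to be a small constant fraction of $\beta\cdot{\sf med_{avg}}(D,k)$. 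Then every point of $B$ lies within Euclidean distance $\nu$ of a grid point and $|\cN|\le(\sqrt{d}M/(2\nu))^{d}$, so the set $\cN^k$ of $k$-tuples of grid points satisfies $\ln|\cN^k|\le kd\ln(\sqrt{d}M/(2\nu))$; under the identification $\nu\asymp\eta$ — valid in the regime $\beta\cdot{\sf med_{avg}}(D,k)\asymp\eta$ that \lemmaref{euclid-accuracy-med-czumaj} will enforce via its two-case split — this is precisely the $kd\ln(\sqrt{d}M/(2\eta))$ term in the statement.

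Next I would redo, for each fixed $\hat C\in\cN^k$, exactly the Hoeffding calculation of \lemmaref{metric-bad-med-czumaj}: letting $X_i$ be the distance from the $i$-th sampled point to the nearest center of $\hat C$, one has $0\le X_i\le M$ and $\E[X_i]={\sf cost^{med}_{avg}}(D,\hat C)$, so if ${\sf cost^{med}_{avg}}(D,\hat C)>(\alpha+2\beta){\sf med_{avg}}(D,k)+\gamma$ then a Hoeffding bound gives $\Pr[{\sf cost^{med}_{avg}}(S,\hat C)\le(\alpha+\tfrac32\beta){\sf med_{avg}}(D,k)+\gamma]\le\exp(-c\,s\,\beta^2({\sf med_{avg}}(D,k))^2/M^2)$ for an absolute constant $c>0$; the only change from the metric computation is that the gap between the bad-ness threshold and the event threshold is now $\tfrac12\beta\cdot{\sf med_{avg}}(D,k)$ rather than $2\beta\cdot{\sf med_{avg}}(D,k)$, which merely adjusts the constant. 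A union bound over the at most $(\sqrt{d}M/(2\nu))^{kd}$ elements of $\cN^k$, together with the choice of $s$ in the statement, then bounds by $\delta$ the probability that \emph{some} grid-tuple with ${\sf cost^{med}_{avg}}(D,\cdot)>(\alpha+2\beta){\sf med_{avg}}(D,k)+\gamma$ has sample cost at most $(\alpha+\tfrac32\beta){\sf med_{avg}}(D,k)+\gamma$.

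Finally I would transfer this from the net to all of $\bbC$ by rounding. Given any $C_b\in\bbC$, round each of its $k$ centers to its nearest grid point to obtain $\hat C\in\cN^k$; since no center moves by more than $\nu$, the triangle inequality gives $|d(x,C_b)-d(x,\hat C)|\le\nu$ for every $x$, hence $|{\sf cost^{med}_{avg}}(U,C_b)-{\sf cost^{med}_{avg}}(U,\hat C)|\le\nu$ for both $U=D$ and $U=S$. With $\nu\le\tfrac12\beta\cdot{\sf med_{avg}}(D,k)$, the hypothesis ${\sf cost^{med}_{avg}}(D,C_b)>(\alpha+3\beta){\sf med_{avg}}(D,k)+\gamma$ forces ${\sf cost^{med}_{avg}}(D,\hat C)>(\alpha+2\beta){\sf med_{avg}}(D,k)+\gamma$, so $\hat C$ falls under the case analyzed in the previous step; and the bad event ${\sf cost^{med}_{avg}}(S,C_b)\le(\alpha+\beta){\sf med_{avg}}(D,k)+\gamma$ for $C_b$ implies ${\sf cost^{med}_{avg}}(S,\hat C)\le(\alpha+\tfrac32\beta){\sf med_{avg}}(D,k)+\gamma$, which is exactly the event already excluded for the whole net. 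Therefore the existence of some $C_b\in\bbC$ with ${\sf cost^{med}_{avg}}(S,C_b)\le(\alpha+\beta){\sf med_{avg}}(D,k)+\gamma$ entails the net failure event, whose probability is at most $\delta$.

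The step I expect to be the main obstacle is this rounding/transfer argument: calibrating $\nu$ so that it is simultaneously small enough for the $\pm\nu$ perturbation of the costs to be absorbed by the $\beta$-slack between the $(\alpha+3\beta)$ and $(\alpha+\beta)$ thresholds, and large enough that $\ln|\cN^k|$ stays $O(kd\ln(\sqrt{d}M/\nu))$ rather than blowing up, and then tracking constants carefully so the bound collapses into the clean $kd\ln(\sqrt{d}M/(2\eta))$ form once \lemmaref{euclid-accuracy-med-czumaj} fixes $\beta\cdot{\sf med_{avg}}(D,k)\asymp\eta$. Everything else is a verbatim copy of the metric argument in \lemmaref{metric-bad-med-czumaj}.
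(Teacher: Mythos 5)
Your proposal takes essentially the same route as the paper: the paper's own proof just repeats the metric Hoeffding calculation verbatim and replaces the union bound $\vert\bbC\vert\le n^k$ by $\vert\bbC\vert\le\left(\frac{\sqrt{d}M}{2\eta}\right)^{kd}$, appealing to the $\eta$-net discretization of \cite{MOP11,Czumaj2007SublineartimeAA} without writing out the grid construction and rounding/transfer step that you make explicit. The remaining differences are only quantitative bookkeeping: your finer split of the slack (thresholds $(\alpha+2\beta)$ versus $(\alpha+\tfrac32\beta)$) and the calibration $\nu\asymp\eta$ lose constant factors in the exponent and inside the logarithm that the paper absorbs into the constant $c$ of \lemmaref{euclid-accuracy-med-czumaj}, and both arguments share the same implicit restriction that candidate centers lie in (or are projected into) the bounding box of $D$, which the paper already uses when asserting $0\le X_i\le M$ and ${\sf cost^{med}_{avg}}(D,C_b)\le M$.
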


\begin{proof}
Consider an arbitrary $C_b\in \bbC$, and define $X_i$ as the distance of the $i$th point in $S$ from the nearest center in $C_b$. Since $C_b$ is a $(\alpha+3\beta,\gamma)$-bad solutions of a $k$-median of $D$, by definition, 
\begin{align}\label{eq:cost-med-euclid-bad}
    {\sf cost^{med}_{avg}}(D,C_b) &> (\alpha + 3\beta) {\sf med_{avg}}(D,k)+\gamma
\end{align}
Now for $1 \leq i \leq s$, we have that $\E[X_i] = \frac{1}{\vert D \vert} \sum_{x \in D} d(x,C_b) = {\sf cost^{med}_{avg}}(D,C_b) $, thus 
\begin{align}\label{eq:exi-med-euclid}
    \E[X_i] &> (\alpha + 3\beta) {\sf med_{avg}}(D,k)+\gamma
\end{align}
Also, 
\begin{align}\label{eq:sumxi-med-euclid}
\sum_{1 \leq i \leq s} X_i = \sum_{x \in S} d(x,C_b) = s \cdot {\sf cost^{med}_{avg}}(S,C_b)\;,     
\end{align}

and $\E[\sum_{1 \leq i \leq s} X_i] = s \E[X_i]$ for any $i$, recall that $\E[X_i]={\sf cost^{med}_{avg}}(D,C_b)$ and hence independent of $i$. 

We want to show that for any $C_b \in \bbC$, $\Pr[ {\sf cost^{med}_{avg}}(S,C_b) \leq (\alpha+2\beta) {\sf med_{avg}}(D,k) +\gamma]$ is low, and then take a union bound over the entire space of $\bbC$. 
\begin{align*}
    &\Pr[ {\sf cost^{mean}_{avg}}(S,C_b) \leq (\alpha+\beta) {\sf mean_{avg}}(D,k) +\gamma] \\
    &\text{Substituting Relation \ref{eq:sumxi-med-euclid} on LHS and Relation \ref{eq:exi-med-euclid} on RHS,}\\
      &=\Pr\left[ \frac{1}{s} \cdot \sum_{1 \leq i \leq s} X_i \leq \frac{(\alpha+\beta)}{(\alpha+3\beta)} \E[X_i] +\gamma\cdot \left(1-\frac{(\alpha+\beta)}{(\alpha+3\beta)}\right)\right] \\
    &=\Pr\left[\sum_{1 \leq i \leq s} X_i \leq \frac{(\alpha+\beta)}{(\alpha+3\beta)}\cdot s \cdot \E[X_i] + \frac{2s\gamma \beta}{\alpha+3\beta}\right] \\
    &=\Pr\left[\sum_{1 \leq i \leq s} X_i \leq \frac{(\alpha+\beta)}{(\alpha+3\beta)}\cdot \E[\sum_{1 \leq i \leq s} X_i] +\frac{2s\gamma \beta}{\alpha+3\beta}\right] \\
    &=\Pr\left[\sum_{1 \leq i \leq s} X_i \leq \left(\frac{(\alpha+\beta)}{(\alpha+3\beta)}  +\frac{2s\gamma \beta}{(\alpha+3\beta) \E[\sum_{1 \leq i \leq s} X_i] } \right)\cdot \E[\sum_{1 \leq i \leq s} X_i]\right] \\
    &=\Pr\left[\sum_{1 \leq i \leq s} X_i \leq \left(1- \left(\frac{2\beta}{(\alpha+3\beta)} -\frac{2s\gamma \beta}{(\alpha+3\beta) \cdot s \cdot {\sf cost_{avg}}(D,C_b) } \right)\right)\cdot \E[\sum_{1 \leq i \leq s}X_i]\right]\\    
\end{align*}
Since $0 \leq X_i \leq M$, we can apply a Hoeffding bound, 
\begin{align*}
    &\Pr[ {\sf cost^{med}_{avg}}(S,C_b) \leq (\alpha+\beta) {\sf med_{avg}}(D,k) +\gamma] \\
    &\leq \exp\left( - \frac{\E [\sum_{1 \leq i \leq s} X_i]}{2M} \cdot \left(  \frac{2\beta}{(\alpha+3\beta)} -\frac{2\gamma \beta}{(\alpha+3\beta) \cdot {\sf cost^{med}_{avg}}(D,C_b) } \right)^2 \right)\\
    &= \exp\left( - \frac{ s \cdot {\sf cost^{med}_{avg}}(D,C_b) }{2M} \cdot \left(  \frac{2\beta \cdot  {\sf cost^{med}_{avg}}(D,C_b) - 2\gamma \beta}{(\alpha+3\beta) \cdot {\sf cost^{med}_{avg}}(D,C_b) } \right)^2 \right)\\
    &= \exp\left( - \frac{2 s \beta^2 }{M \cdot (\alpha+3\beta)^2} \cdot \frac{ ({\sf cost^{med}_{avg}}(D,C_b) - \gamma)^2}{ {\sf cost^{med}_{avg}}(D,C_b) }  \right)\\
    &\leq \exp\left( - \frac{2 s \beta^2 }{M \cdot (\alpha+3\beta)^2} \cdot \frac{ ({\sf med_{avg}}(D,k))^2(\alpha+3\beta)^2}{ {\sf cost^{med}_{avg}}(D,C_b) }  \right),\ &\text{Applying relation \ref{eq:cost-med-euclid-bad}}\\
\end{align*}
Now, ${\sf cost^{med}_{avg}}(D,C_b) \leq M$, therefore 
\begin{align*}
    \Pr[ {\sf cost^{med}_{avg}}(S,C_b) \leq (\alpha+\beta) {\sf med_{avg}}(D,k) +\gamma]  & \leq \exp\left( - \frac{ 2s \beta^2 }{M^2} \cdot { ({\sf med_{avg}}(D,k))^2}  \right)
\end{align*}
By union bound and using the fact that $\vert \bbC \vert \leq \left(\frac{\sqrt{d} M}{2 \eta}\right)^{kd}$, 
\begin{align*}
    &\Pr[\exists\ C_b \in \bbC : {\sf cost^{mean}_{avg}}(S,C_b) \leq (\alpha+\beta) {\sf mean_{avg}}(D,k) +\gamma ] \\
    &\leq \left(\frac{\sqrt{d} M}{2 \eta}\right)^{kd}  \cdot\exp\left( - \frac{ 2s \beta^2 }{M^2} \cdot { ({\sf med_{avg}}(D,k))^2}  \right)
\end{align*}
We choose
$$ s \geq \frac{M^2}{2\beta^2 ({\sf med_{avg}}(D,k))^2} \cdot \left(\ln(1/\delta)+kd \ln \left(\frac{\sqrt{d}M}{2 \eta} \right)\right) $$
\end{proof}
 
We use identical arguments as in the proof of \lemmaref{metric-accuracy-med-czumaj} for the proof of \lemmaref{euclid-accuracy-med-czumaj}, i.e., we condition on ${\sf med_{avg}}(D,k)<\eta$ and ${\sf med_{avg}}(D,k) \geq \eta$ to remove the dependency of ${\sf med_{avg}}(D,k)$ in the sample complexity.

We now state the DP clustering results that we combine with \lemmaref{euclid-accuracy-med-czumaj} to obtain our differentially-private sublinear time $k$-median result in Euclidean space as a corollary. Given any $w$-approximation algorithm for $k$-median (respectively $k$-means), Ghazi~\etal~\cite{Ghazi2020DifferentiallyPC} use differentially-private coresets to give pure and approximate differentially-private algorithms that run in polynomial time and achieve approximation guarantees very close to that of the original algorithm.

\begin{theorem}\label{thm:ghazi2020}~\cite{Ghazi2020DifferentiallyPC}
Assume there is a polynomial-time (not necessarily DP) algorithm for $k$-median (respectively $k$-means) in $\bbR^d$ with approximation ratio $w$. Then there is an $\eps$-DP algorithm that runs in time $k^{O_\alpha(1)}\text{poly}(nd)$ and with probability $0.99$, produces a $\left(w(1+\alpha), O_{w,\alpha} \left( \left(\frac{kd+k^{O_\alpha(1)}}{\eps} \right)  \text{poly log }n \right) \right)$- approximation for $k$-median (respectively $k$-means). 

Moreover, there is an $(\eps,\delta)$-DP algorithm with the same runtime and approximation ratio but with additive error $O_{w,\alpha} \left(\left( \frac{k\sqrt{d}}{\eps} \cdot \text{poly log}\left(\frac{k}{\delta}\right) \right) + \left(\frac{k^{O_\alpha(1)}}{\eps} \cdot \text{poly log }n \right)  \right)$.
\end{theorem}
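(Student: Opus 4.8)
The plan is to follow the approach of Ghazi \etal~\cite{Ghazi2020DifferentiallyPC} (the statement is imported from there, so in the final version only a pointer is needed; we outline the ideas here for context). The engine of the argument is a \emph{differentially private coreset}: a mechanism that, on input $D\subseteq\bbR^d$, privately outputs a small weighted multiset $\widetilde{D}$ such that for \emph{every} candidate set $C$ of $k$ centers one has $(1-\alpha)\,\mathrm{cost}(D,C)-A \le \mathrm{cost}(\widetilde{D},C) \le (1+\alpha)\,\mathrm{cost}(D,C)+A$ for a small additive quantity $A$. Given such a coreset, one runs the (non-private) $w$-approximation black box on $\widetilde{D}$; composing its guarantee with the two-sided coreset inequality, after rescaling $\alpha$ by a constant, yields a $w(1+O(\alpha))$ multiplicative factor and additive error $O_{w,\alpha}(A)$, and privacy is preserved since everything after the coreset is post-processing.

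First I would reduce the dimension: a Johnson--Lindenstrauss / terminal-embedding step maps $\bbR^d$ into $\bbR^{d'}$ with $d'=O_\alpha(\log k)$ while preserving all $k$-clustering costs up to a $(1\pm\alpha)$ factor; this is the source of the $k^{O_\alpha(1)}$ terms — an $O_\alpha(1)$-per-axis grid in $\bbR^{d'}$ has $2^{O_\alpha(d')}=k^{O_\alpha(1)}$ cells — and of the $k^{O_\alpha(1)}\mathrm{poly}(nd)$ running time. Next I would build the private coreset inside the low-dimensional bounding cube: impose a hierarchical ($O(\log n)$-level) grid, release noisy counts of the points per cell level by level, and place one representative point carrying the noisy weight in each sufficiently heavy cell. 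The additive term $A$ bundles the noise from privately releasing these grid statistics together with the noise needed to privately output the $k$ centers (each with $d$ coordinates): with Laplace noise this scales like $\widetilde O(kd/\eps)$, reflecting $\ell_1$-type sensitivity across the $kd$ coordinates, which gives the pure $\eps$-DP bound; replacing Laplace by Gaussian noise and composing via zCDP / advanced composition lowers it to $\widetilde O\big(k\sqrt d/\eps\cdot\mathrm{polylog}(k/\delta)\big)$ ($\ell_2$ sensitivity) at the price of $(\eps,\delta)$-DP — precisely the two stated bounds. Finally I would run the black box on the coreset and \emph{lift} the returned $d'$-dimensional centers back to $\bbR^d$ as a private post-processing step.

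The step I expect to be the main obstacle is the private coreset construction itself. One needs the guarantee to be genuinely two-sided, so that the black box's ratio degrades only to $w(1+\alpha)$ rather than picking up an uncontrolled additive loss; one needs the additive term as small as $\widetilde O(kd/\eps)$ (resp.\ $\widetilde O(k\sqrt d/\eps)$); and one needs the failure probability driven down to $0.01$ by a union bound over all cells and levels of the grid — these three requirements pull against one another, and reconciling them (choosing grid granularity, heaviness thresholds, and noise scale) is where the real work lies. A secondary difficulty is the lifting step: recovering centers in the original $d$ dimensions without a second, larger additive penalty, which is where the full dimension $d$ (rather than the reduced $d'$) re-enters the additive error.
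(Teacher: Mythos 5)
This statement is an imported theorem: the paper gives no proof of it and simply cites Ghazi \etal~\cite{Ghazi2020DifferentiallyPC}, which is also all that is required here. Your contextual sketch accurately reflects the construction in that cited work (dimension reduction to $O_\alpha(\log k)$ dimensions, a grid-based private coreset, running the non-private $w$-approximation as post-processing, and a private lifting of the centers back to $\bbR^d$ where the $kd/\eps$ versus $k\sqrt{d}/\eps$ trade-off between Laplace-style and Gaussian-style noise arises), so only the pointer is needed.
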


Note that the state-of-the-art non-private algorithm for $k$-median achieves an approximation ratio of $w=2.633$~\cite{ahmadian2019better}. We use the algorithm from~\cite{Ghazi2020DifferentiallyPC} as our black-box algorithm $\cA$, and by plugging in the approximation guarantees of $\cA$ as stated in \theoremref{ghazi2020} with \lemmaref{euclid-accuracy-med-czumaj}, we obtain the following accuracy guarantees for our sampling algorithm $\cA'$ in the pure differential privacy as well as the approximate differential privacy settings.

\begin{theorem}[Accuracy of $\cA'$ for pure and approximate DP] \sloppy
\label{thm:euclid-acc-k-med-sub}
Let $\eta>0,\ 0<\theta<1,$ constant $\alpha$ be approximation parameters, along with approximation ratio $w$. For private set $D \subseteq \bbR^d$ with diameter $M$, and an $\eps$-DP $\Bigl(w(1+\alpha), O \Bigl( \Bigl(\frac{kd+k^{O(1)}}{\eps} \Bigr)  \text{poly log }n \Bigr) \Bigr)$-approximation $k$-median algorithm (from~\cite{Ghazi2020DifferentiallyPC}), that runs in time $k^{O(1)}\text{poly}(nd)$, we have a $\eps'$-DP algorithm $\cA'$ that can draw a sample $S \subseteq D$ of size $s$, 
\[ s = c \cdot \max \left\{ \frac{M w(1+\alpha)(1+w(1+\alpha)) \ln(1\theta)}{\eta}, \frac{M^2}{\eta^2} \left( \ln(1/\theta) + kd \ln\left( \frac{\sqrt{d} M}{2 \eta}\right) \right)\right\}\] 
where $c$ is an appropriate constant, and obtain a $k$-median clustering $C^*$ in time $k^{O(1)}\text{poly}(s d)$  such that with probability at least $1-\theta$, ${\sf cost^{med}_{avg}}(D,C^*) \leq w(1+\alpha){\sf med_{avg}}(D,k)+O \Bigl( \Bigl(\frac{kd+k^{O(1)}}{\eps} \Bigr)  \text{poly log }n \Bigr)+\eta$.

Moreover, by using the $(\eps,\delta)$-DP algorithm from~\cite{Ghazi2020DifferentiallyPC} with the same runtime and approximation ratio but with additive error $\gamma':=O \Bigl(\Bigl( \frac{k\sqrt{d}}{\eps} \cdot \text{poly log}\Bigl(\frac{k}{\delta}\Bigr) \Bigr) + \Bigl(\frac{k^{O(1)}}{\eps} \cdot \text{poly log }n \Bigr)  \Bigr)$, we obtain a $(\eps',\delta')$-DP algorithm $\cA'$ that draws a sample of the same size, and obtains a $k$-median clustering such that with probability at least $1-\theta$, ${\sf cost^{med}_{avg}}(D,C^*) \leq w(1+\alpha){\sf med_{avg}}(D,k)+\gamma'+\eta$. Privacy parameters $\eps',\delta'$ are as defined in \theoremref{privacy-approx-k-med}.
\end{theorem}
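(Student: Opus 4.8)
The plan is to derive \theoremref{euclid-acc-k-med-sub} by plugging the private polynomial-time clustering algorithm of \theoremref{ghazi2020} into the accuracy bound \lemmaref{euclid-accuracy-med-czumaj} as the black box $\cA$, and then reading off the privacy guarantee from \theoremref{privacy-approx-k-med}. First I would fix $\cA$ to be the $\eps$-DP algorithm obtained by feeding the best known non-private polynomial-time $k$-median approximation (ratio $w$, e.g.\ $w = 2.633$ of~\cite{ahmadian2019better}) into \theoremref{ghazi2020}. This $\cA$ runs in time $k^{O(1)}\text{poly}(nd)$ and is, with probability $0.99$, a $(w(1+\alpha),\gamma)$-approximation for $k$-median with $\gamma = O\bigl(\bigl(\frac{kd+k^{O(1)}}{\eps}\bigr)\text{poly log }n\bigr)$; for the approximate-DP statement one instead uses the $(\eps,\delta)$-DP variant of \theoremref{ghazi2020}, which has the same ratio and runtime but additive error $\gamma'$.

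Next I would invoke \lemmaref{euclid-accuracy-med-czumaj} with its approximation factor set to $\alpha' := w(1+\alpha)$ and its additive term set to $\gamma$ (resp.\ $\gamma'$). Substituting $\alpha'$ for $\alpha$ in the lemma's sample-size bound reproduces verbatim the quantity $s$ in the theorem statement (note $\alpha'(1+\alpha') = w(1+\alpha)(1+w(1+\alpha))$), and the lemma then guarantees that running $\cA$ on an i.u.r.\ sample $S$ of size $s$ yields $C^*$ with ${\sf cost^{med}_{avg}}(D,C^*) \le w(1+\alpha)\,{\sf med_{avg}}(D,k) + \gamma + \eta$ (resp.\ with $\gamma'$ in place of $\gamma$), except with probability $\theta$. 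The running time on the sample is $T(s) = k^{O(1)}\text{poly}(sd)$ since $\cA$ runs in time $k^{O(1)}\text{poly}(|S|\,d)$ and $|S| = s$. For privacy, observe that $\cA'$ runs the $\eps$-DP (resp.\ $(\eps,\delta)$-DP) algorithm $\cA$ on a subsample that includes each input point independently with probability $\xi = s/n$; since $s$ carries no dependence on $n$, we have $\xi = o(1)$ as the sampling scheme requires, and \theoremref{privacy-approx-k-med} immediately upgrades $\cA'$ to $\eps'$-DP (resp.\ $(\eps',\delta')$-DP) for the stated $\eps',\delta'$ — in the pure case $\delta = 0$ forces $\delta' = 0$, so pure DP is preserved.

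The one step that needs genuine care, rather than bookkeeping, is reconciling failure probabilities: \lemmaref{euclid-accuracy-med-czumaj} treats $\cA$ as a fixed-quality $(\alpha,\gamma)$-approximation, whereas \theoremref{ghazi2020} only delivers its approximation guarantee with constant probability $0.99$. I would handle this by boosting $\cA$'s success probability to $1 - \theta/3$ through $O(\log(1/\theta))$ independent runs and returning the clustering of least (private) cost among them — this costs only a $\log(1/\theta)$ factor in running time and is privacy-preserving by basic composition together with post-processing (alternatively, one simply restricts the theorem to $\theta$ bounded away from the sum of $0.01$ and the internal slack of \lemmaref{euclid-accuracy-med-czumaj}). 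A second, entirely routine technicality is the gap between the Bernoulli-$(\xi)$ sampling used by $\cA'$ and the fixed-size-$s$ sampling assumed in \lemmaref{euclid-accuracy-med-czumaj}: since $|S|$ concentrates around $\xi n$, one takes $\xi$ a constant factor above $s/n$ and conditions on $|S| \ge s$, which affects only the constant $c$. Assembling these pieces yields the theorem in both the pure- and approximate-DP forms.
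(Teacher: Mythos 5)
Your proposal matches the paper's own treatment, which presents this theorem as a direct plug-in: take the $\eps$-DP (resp.\ $(\eps,\delta)$-DP) algorithm of \theoremref{ghazi2020} as the black box $\cA$, apply \lemmaref{euclid-accuracy-med-czumaj} with approximation factor $w(1+\alpha)$ and additive term $\gamma$ (resp.\ $\gamma'$), and inherit the privacy of $\cA'$ from \theoremref{privacy-approx-k-med}. The two technicalities you flag --- the $0.99$ success probability of \theoremref{ghazi2020} versus the lemma's fixed-quality assumption, and Bernoulli-$\xi$ sampling versus fixed-size i.u.r.\ sampling --- are glossed over in the paper, so your handling of them is a welcome (if not strictly required for parity with the paper) addition.
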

Note that the state-of-the-art non-private algorithm for $k$-median achieves an approximation ratio of $w=2.633$~\cite{ahmadian2019better}.

\subsection{Private $k$-means clustering in Metric Space}\label{subsec:k-means-metric}
We follow the techniques of Czumaj~\etal~\cite{Czumaj2007SublineartimeAA} and extend their sublinear $k$-means clustering analysis to work for black-box polynomial-time $k$-means algorithms that have an additive factor of $\gamma>0$. The analysis is almost identical to that of the $k$-median problem in metric space, except now, we work with the square of the metric distance function. We combine this extension with the existing private $k$-means clustering algorithm~\cite{balcanDLMZ17a} to obtain a private sublinear-time $k$-means clustering algorithm in metric space.

For ease of representation and comparison, we again adopt the notation used in~\cite{Czumaj2007SublineartimeAA}, which we recall below. 

Let $(V,d)$ be a metric space and $D \subseteq V$ be the input set, and $M$ be the diameter of $V$. Let
$${\sf mean_{avg}}(D,k) = \frac{1}{\vert D \vert} \min_{\substack{C \subseteq V \\  \vert C \vert = k}} \sum_{x \in D}d(x,C)^2 \; ,$$
denote the average cost of an optimum $k$-mean clustering of $D$. Similarly, for any subset $U \subseteq D$ and $C \subseteq V$, define the average cost of a $k$-mean clustering $C$ as 
$$ {\sf cost^{mean}_{avg}}(U,C) =\frac{1}{\vert U \vert} \sum_{v \in U} d(v,C)^2 \;.$$

We first state the main lemma of this section. 
\begin{lemma} \label{lem:metric-accuracy-mean-approx} \sloppy
Let $0< \theta < 1$, $\alpha \geq 1$, and $\eta>0$ be approximation parameters. For $D \subseteq \bbR^d$, assuming an $(\alpha,\gamma)$-approximation $k$-means algorithm that runs in time $T(n)$, we can draw a sample $S$ of size $s$,  
\[s \geq  c \cdot \max \left\{ \frac{M^2 \alpha (1 +\alpha ) \ln(1/\theta)}{\eta}, \frac{M^4}{\eta^2} \cdot \left( \ln(1/\theta)+k \ln n\right) \right\} \;,\] 
where $c$ is a positive constant, and obtain a $k$-means clustering $C^*$ in time $T(\vert S \vert)$ such that with probability at least $1-\theta$, ${\sf cost^{mean}_{avg} }(D,C^*) \leq \alpha {\sf mean_{avg}}(D,k) + \gamma+ \eta$.
\end{lemma}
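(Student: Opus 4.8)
The plan is to mirror, almost verbatim, the two-step argument used for $k$-median in metric space (\lemmaref{metric-accuracy-med-czumaj} together with \lemmaref{metric-good-med-czumaj} and \lemmaref{metric-bad-med-czumaj}), working throughout with the squared distance $d(\cdot,\cdot)^2$ in place of $d(\cdot,\cdot)$. The only structural change is that the bounded random variables now take values in $[0,M^2]$ rather than $[0,M]$, which is exactly what turns the factors $M,M^2$ in the $k$-median sample complexity into $M^2,M^4$ here. Throughout I work in a general metric space $(V,d)$ of $n$ points with diameter $M$ (continuous clusterings, $C\subseteq V$ with $|C|=k$).

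First I would prove the $k$-means analogue of \lemmaref{metric-good-med-czumaj}. Fix an optimal $k$-means clustering $C_{OPT}$ of $D$ and let $X_i := d(v_i,C_{OPT})^2$, where $v_i$ is the $i$-th sampled point; then $\E[X_i] = {\sf mean_{avg}}(D,k)$ and $0\le X_i\le M^2$, so $s\cdot{\sf cost^{mean}_{avg}}(S,C_{OPT}) = \sum_i X_i$ concentrates around $s\cdot{\sf mean_{avg}}(D,k)$ by a multiplicative Hoeffding/Chernoff bound, and choosing $s \ge \tfrac{3M^2\alpha(\beta+\alpha)\ln(1/\theta)}{\beta^2\,{\sf mean_{avg}}(D,k)}$ makes ${\sf cost^{mean}_{avg}}(S,C_{OPT}) \le (1+\beta/\alpha)\,{\sf mean_{avg}}(D,k)$ hold with probability $\ge 1-\theta$. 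Since $\cA$ is an $(\alpha,\gamma)$-approximation run on $S$, its output $C^*$ obeys $\alpha\cdot{\sf cost^{mean}_{avg}}(S,C^*)+\gamma \le \alpha\cdot{\sf cost^{mean}_{avg}}(S,C_{OPT})+\gamma \le (\alpha+\beta)\,{\sf mean_{avg}}(D,k)+\gamma$, i.e. $C^*$ is a $(\alpha+\beta,\gamma)$-good solution for $S$.

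Next I would prove the $k$-means analogue of \lemmaref{metric-bad-med-czumaj}. For each fixed clustering $C_b$ that is an $(\alpha+3\beta,\gamma)$-bad solution for $D$, set $X_i := d(v_i,C_b)^2 \in [0,M^2]$, so $\E[X_i] = {\sf cost^{mean}_{avg}}(D,C_b) > (\alpha+3\beta)\,{\sf mean_{avg}}(D,k)+\gamma$. The same algebraic rearrangement as in the $k$-median proof rewrites $\Pr[{\sf cost^{mean}_{avg}}(S,C_b) \le (\alpha+\beta)\,{\sf mean_{avg}}(D,k)+\gamma]$ as a one-sided lower-tail deviation of $\sum_i X_i$ from its mean by a factor built from $\beta/(\alpha+3\beta)$ plus a correction term coming from $\gamma$; a Hoeffding bound, combined with $\gamma\ge 0$, ${\sf cost^{mean}_{avg}}(D,C_b) > (\alpha+3\beta)\,{\sf mean_{avg}}(D,k)$ and ${\sf cost^{mean}_{avg}}(D,C_b)\le M^2$, yields $\exp\!\big(-\tfrac{2s\beta^2}{M^4}\,({\sf mean_{avg}}(D,k))^2\big)$. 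Since $V$ has $n$ points there are at most $\binom{n}{k}\le n^k$ candidate center sets, so a union bound over $\bbC$ is controlled once $s \ge \tfrac{M^4}{2\beta^2({\sf mean_{avg}}(D,k))^2}\,(\ln(1/\theta)+k\ln n)$.

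Finally I would combine the two lemmas exactly as in the proof of \lemmaref{metric-accuracy-med-czumaj}: with both sample-size prerequisites met for a parameter $\beta^*$, with probability $\ge 1-2\theta$ the output $C^*$ is simultaneously good for $S$ and not a $(\alpha+3\beta^*,\gamma)$-bad solution for $D$, hence $(\alpha+3\beta^*,\gamma)$-good for $D$; then set $\beta^* := (\eta/3)/{\sf mean_{avg}}(D,k)$ and split into the cases ${\sf mean_{avg}}(D,k)<\eta$ and ${\sf mean_{avg}}(D,k)\ge\eta$ to absorb the $1/{\sf mean_{avg}}(D,k)$ factors and remove the dependence on the unknown optimal cost, arriving at $s \ge c\cdot\max\{\, M^2\alpha(1+\alpha)\ln(1/\theta)/\eta,\ (M^4/\eta^2)(\ln(1/\theta)+k\ln n)\,\}$ and ${\sf cost^{mean}_{avg}}(D,C^*) \le \alpha\,{\sf mean_{avg}}(D,k)+\gamma+\eta$ (rescaling $\theta$ by a constant). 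The only genuine subtlety, and the step I would treat most carefully, is propagating the additive error $\gamma$ through the Hoeffding rearrangement in the bad-solution lemma, since in contrast to the classical $\gamma=0$ analysis of Czumaj \etal the deviation factor is no longer purely multiplicative; everything else is the mechanical substitution of $d^2$ for $d$ and of $[0,M^2]$ for $[0,M]$.
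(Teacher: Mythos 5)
Your proposal is correct and follows essentially the same route as the paper, which proves this lemma by repeating the two-step $k$-median metric-space argument (\lemmaref{metric-good-mean-czumaj} and \lemmaref{metric-bad-mean-czumaj}) with squared distances bounded by $M^2$, then combining and removing the ${\sf mean_{avg}}(D,k)$ dependence via the same case analysis as in \lemmaref{metric-accuracy-med-czumaj}. The only blemish is a notational slip in your good-solution step (the approximation guarantee gives ${\sf cost^{mean}_{avg}}(S,C^*) \leq \alpha\, {\sf cost^{mean}_{avg}}(S,C_{OPT}) + \gamma$, not an inequality with $\alpha\,{\sf cost^{mean}_{avg}}(S,C^*)+\gamma$ on the left), which does not affect the argument.
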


The proofs of the following two lemmas are identical to those in \subsectionref{metric-k-med}, barring the fact that the distance function is now squared. We state the lemmas here for the sake of completeness, and note that the proof for \lemmaref{metric-accuracy-mean-approx} will follow by considering the sample complexity that satisfies both \lemmaref{metric-good-mean-czumaj} and \lemmaref{metric-bad-mean-czumaj} and then  removing the dependence of ${\sf mean_{avg}}(D,k)$ from the expression obtained. 
\begin{lemma}\label{lem:metric-good-mean-czumaj}
Let $S$ be a set of size $s$ chosen from $D \subseteq V$ i.u.r. For 
$$s \geq  \frac{3 M^2 \alpha (\beta+\alpha) \ln(1/\theta)}{2\beta^2 {\sf mean_{avg}}(D,k)} \;. $$   
If an $(\alpha, \gamma)$-approximation algorithm for $k$-means $\cA$ is run on input $S$, then the following holds for the solution $C^*$ returned by $\cA$: 
$$ \Pr[{\sf cost^{mean}_{avg}}(S,C^*) \leq (\alpha+\beta)  {\sf mean_{avg}}(D,k) +\gamma ] \geq 1-\theta \;.$$
\end{lemma}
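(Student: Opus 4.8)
The plan is to transcribe the proof of \lemmaref{metric-good-med-czumaj} almost verbatim, with the single change that every distance $d(\cdot,\cdot)$ is replaced by a squared distance $d(\cdot,\cdot)^2$, and then only track how this affects the range of the random variables fed into the concentration inequality. Concretely, let $C_{OPT}$ denote an optimal $k$-means clustering of $D$, and for $1 \le i \le s$ let $X_i$ be the squared distance from the $i$-th sampled point of $S$ to its nearest center in $C_{OPT}$, so that ${\sf cost^{mean}_{avg}}(S,C_{OPT}) = \frac{1}{s}\sum_{i=1}^{s} X_i$. Since each point of $S$ is drawn independently and uniformly at random from $D$, we get $\E[X_i] = \frac{1}{|D|}\sum_{x \in D} d(x,C_{OPT})^2 = {\sf mean_{avg}}(D,k)$, and hence $\E[\sum_i X_i] = s\cdot{\sf mean_{avg}}(D,k)$.

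The only place where the $k$-means case genuinely differs from the $k$-median case is that now $0 \le X_i \le M^2$ rather than $0 \le X_i \le M$, because the diameter bound $M$ on pairwise distances becomes a bound $M^2$ on squared distances. Feeding this range into the same multiplicative Chernoff/Hoeffding bound used in \lemmaref{metric-good-med-czumaj}, with deviation parameter $\beta/\alpha$, yields
\[
\Pr\!\left[{\sf cost^{mean}_{avg}}(S,C_{OPT}) > \Bigl(1+\tfrac{\beta}{\alpha}\Bigr){\sf mean_{avg}}(D,k)\right] \;\le\; \exp\!\left(-\,\frac{s\cdot{\sf mean_{avg}}(D,k)}{3M^2}\,\min\{\beta/\alpha,(\beta/\alpha)^2\}\right),
\]
and choosing $s$ as in the lemma statement (so that, up to the universal constant in the Chernoff bound, $s \gtrsim \frac{M^2 \alpha(\beta+\alpha)\ln(1/\theta)}{\beta^2\,{\sf mean_{avg}}(D,k)}$) drives the right-hand side to at most $\theta$, using $\frac{\alpha(\beta+\alpha)}{\beta^2} \ge \max\{\alpha/\beta,\alpha^2/\beta^2\} = 1/\min\{\beta/\alpha,(\beta/\alpha)^2\}$.

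Finally I would invoke the black box: since $\cA$ is an $(\alpha,\gamma)$-approximation algorithm for $k$-means and the optimal $k$-means cost of the sample $S$ is no larger than ${\sf cost^{mean}_{avg}}(S,C_{OPT})$, the returned clustering $C^*$ satisfies ${\sf cost^{mean}_{avg}}(S,C^*) \le \alpha\cdot{\sf cost^{mean}_{avg}}(S,C_{OPT}) + \gamma$; on the probability-$(1-\theta)$ event above this is at most $\alpha\bigl(1+\tfrac{\beta}{\alpha}\bigr){\sf mean_{avg}}(D,k) + \gamma = (\alpha+\beta){\sf mean_{avg}}(D,k) + \gamma$, which is exactly the claim. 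I do not expect a substantive obstacle here: the argument is a line-by-line copy of the $k$-median proof, and the only point requiring care is the bookkeeping of the $M \mapsto M^2$ change in the concentration range — this is precisely what produces the $M^2$ (rather than $M$) in the stated sample-size bound — together with checking that the universal constant coming out of the multiplicative Chernoff inequality is consistent with the $2\beta^2$ written in the denominator of the lemma statement.
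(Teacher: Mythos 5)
Your proposal is exactly the paper's proof: the paper gives no separate argument for this lemma, stating only that the proofs are identical to the $k$-median case in \subsectionref{metric-k-med} with the distance replaced by its square, which is precisely your line-by-line transcription with the range of the $X_i$ changed from $M$ to $M^2$ and the same black-box step at the end. The one caveat you flag is real but lies in the paper's statement rather than in your argument: the verbatim Chernoff computation yields the requirement $s \geq \frac{3 M^2 \alpha(\beta+\alpha)\ln(1/\theta)}{\beta^2\,{\sf mean_{avg}}(D,k)}$, so the extra factor $2$ in the denominator of the stated bound is a constant-factor slack the transcribed proof does not actually deliver.
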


\begin{lemma}\label{lem:metric-bad-mean-czumaj}
Let $S$ be a set of $s$ points chosen i.u.r. from $D \subseteq V$ such that 
$$ s \geq  \frac{M^4}{2\beta^2 ({\sf mean_{avg}}(D,k))^2} \cdot (\ln(1/\theta)+k \ln n )$$
Let $\bbC$ be the set of $(\alpha+3\beta,\gamma)$-bad solutions of a $k$-means clustering of $D$. Then 
$$ \Pr[\exists\ C_b \in \bbC : {\sf cost^{mean}_{avg}}(S,C_b) \leq (\alpha+\beta) {\sf mean_{avg}}(D,k) +\gamma ] \leq \theta$$
\end{lemma}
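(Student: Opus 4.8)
The plan is to mirror the proof of \lemmaref{metric-bad-med-czumaj} essentially verbatim, replacing the metric distance $d(\cdot,\cdot)$ by its square throughout. Fix an arbitrary $(\alpha+3\beta,\gamma)$-bad solution $C_b\in\bbC$, and for $1\le i\le s$ let $X_i$ be the \emph{squared} distance from the $i$-th sampled point to its nearest center in $C_b$, so that now $0\le X_i\le M^2$ rather than $0\le X_i\le M$. As before, $\E[X_i]={\sf cost^{mean}_{avg}}(D,C_b)$ independently of $i$, $\sum_{1\le i\le s}X_i=s\cdot{\sf cost^{mean}_{avg}}(S,C_b)$, and $\E[\sum_i X_i]=s\,{\sf cost^{mean}_{avg}}(D,C_b)$. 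Being a bad solution gives ${\sf cost^{mean}_{avg}}(D,C_b)>(\alpha+3\beta){\sf mean_{avg}}(D,k)+\gamma$.

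Next I would carry out the same chain of rewritings as in \lemmaref{metric-bad-med-czumaj}: express the event $\{{\sf cost^{mean}_{avg}}(S,C_b)\le(\alpha+\beta){\sf mean_{avg}}(D,k)+\gamma\}$ as a lower-tail deviation event $\{\sum_i X_i\le(1-t)\,\E[\sum_i X_i]\}$ with $t=\frac{2\beta}{\alpha+3\beta}-\frac{2\gamma\beta}{(\alpha+3\beta)\,{\sf cost^{mean}_{avg}}(D,C_b)}$. This algebra is identical to the $k$-median case, since it only manipulates the scalars $\E[X_i]$ and the thresholds and never touches the range of $X_i$. Applying Hoeffding's inequality — now with range $M^2$ — yields a bound of the form $\exp\!\bigl(-\tfrac{2s\beta^2}{M^2(\alpha+3\beta)^2}\cdot\tfrac{({\sf cost^{mean}_{avg}}(D,C_b)-\gamma)^2}{{\sf cost^{mean}_{avg}}(D,C_b)}\bigr)$; lower bounding $({\sf cost^{mean}_{avg}}(D,C_b)-\gamma)^2\ge(\alpha+3\beta)^2({\sf mean_{avg}}(D,k))^2$ via the bad-solution inequality and using the trivial bound ${\sf cost^{mean}_{avg}}(D,C_b)\le M^2$ collapses this to $\exp\!\bigl(-\tfrac{2s\beta^2}{M^4}({\sf mean_{avg}}(D,k))^2\bigr)$.

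Finally I would take a union bound over all bad clusterings. As in the metric $k$-median setting the relevant count is $|\bbC|\le\binom{n}{k}\le n^k$ (inherited unchanged from \cite{Czumaj2007SublineartimeAA}), giving $\Pr[\exists C_b\in\bbC:\ {\sf cost^{mean}_{avg}}(S,C_b)\le(\alpha+\beta){\sf mean_{avg}}(D,k)+\gamma]\le n^k\exp(-\tfrac{2s\beta^2}{M^4}({\sf mean_{avg}}(D,k))^2)$, which is at most $\theta$ once $s\ge\frac{M^4}{2\beta^2({\sf mean_{avg}}(D,k))^2}(\ln(1/\theta)+k\ln n)$, exactly the stated bound.

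I expect no genuine obstacle: the lemma is a mechanical transcription of \lemmaref{metric-bad-med-czumaj}. The only point requiring care is bookkeeping the change of the Hoeffding range from $M$ to $M^2$ — this is precisely what turns the $M^2$ appearing in the $k$-median sample complexity into $M^4$ here — together with remembering that all cost quantities are now bounded by $M^2$, not $M$, when invoking the trivial upper bound at the end.
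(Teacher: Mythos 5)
Your proposal is correct and matches the paper's own treatment: the paper proves this lemma simply by remarking that the argument of \lemmaref{metric-bad-med-czumaj} goes through verbatim with squared distances, and your transcription — range of $X_i$ now $[0,M^2]$, the trivial bound ${\sf cost^{mean}_{avg}}(D,C_b)\le M^2$, the union bound over $\binom{n}{k}\le n^k$ bad clusterings, and the resulting $M^2\to M^4$ in the sample complexity — is exactly that argument.
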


Following the techniques of~\cite{GLMRT2010}, \cite{balcanDLMZ17a} extended their results to the private $k$-means setting by adapting their analysis to the non-private local search approximation algorithm for $k$-means clustering~\cite{kanungo2004local}. 

\begin{theorem} \label{balc2010}\cite{balcanDLMZ17a}
Given a metric space $(V,d)$ of $n$ points with diameter $M$, a set $D \subseteq V$, there exists an $\epsilon$-differentially private $k$-means algorithm that with probability at least 0.99 produces a $\bigl( 30,O\left((M^2 k^4/\epsilon)\cdot \log^2 n \right) \bigr)$-approximation for $k$-means clustering. 
\end{theorem}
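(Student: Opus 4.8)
The plan is to reconstruct the argument of~\cite{balcanDLMZ17a}, which privatizes the local-search $k$-means algorithm of Kanungo~\etal~\cite{kanungo2004local} following the $k$-median template of Gupta~\etal~\cite{GLMRT2010}. Non-private local search maintains a set $F$ of $k$ centers drawn from the $n$ input points: in each round it seeks a \emph{swap} $(u,v)$ with $u \in F$, $v \in V\setminus F$ for which replacing $u$ by $v$ decreases the $k$-means cost by a multiplicative $(1-\Omega(1/\mathrm{poly}(k)))$ factor and performs it; once no such swap exists, the locality-gap analysis of~\cite{kanungo2004local} certifies that $F$ is within a constant factor of optimal (the constant $30$ being a convenient bound that also absorbs the slack introduced next). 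To privatize, in each round replace the deterministic choice of swap by a draw from the exponential mechanism~\cite{mcsherry2007mechanism} over the $\le nk$ candidate swaps with quality $q_D\bigl(F,(u,v)\bigr) = -\sum_{x\in D} d(x, F-u+v)^2$; run the loop for a fixed number $T = \mathrm{poly}(k,\log n)$ of rounds; and finish with one additional exponential-mechanism call that selects the cheapest of the visited center sets.

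\emph{Privacy.} For any fixed center set $C$, the $k$-means cost $\sum_{x\in D} d(x,C)^2$ changes by at most $M^2$ when a single point is added to or removed from $D$, since one summand lies in $[0,M^2]$; hence each quality function above has sensitivity $M^2$ and each of the $T+1$ exponential-mechanism calls is $\eps_0$-DP. By basic composition the algorithm is $\bigl((T+1)\eps_0\bigr)$-DP, so setting $\eps_0 = \eps/(T+1)$ yields pure $\eps$-DP.

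\emph{Accuracy.} The utility guarantee of the exponential mechanism gives that, simultaneously in all $T$ rounds except with probability at most $0.01$, the chosen swap has quality within an additive $\rho := O\!\left(\frac{M^2}{\eps_0}\ln(nk)\right)$ of the best swap. Feeding this into the local-search potential argument: while the current cost exceeds a sufficiently large multiple of $\mathrm{OPT} + \mathrm{poly}(k)\,\rho$, the $k$-means locality-gap inequality produces an improving swap whose gain dominates the $\rho$ lost to the mechanism, so the potential still drops geometrically; after $T$ productive rounds one of the visited solutions has cost $O(\mathrm{OPT}) + \mathrm{poly}(k)\,\rho$, and the approximate-local-optimum version of the bound of~\cite{kanungo2004local} certifies that this solution has cost $\le 30\,\mathrm{OPT} + \mathrm{poly}(k)\,\rho$; the final selection step returns such a solution with probability $\ge 0.99$. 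Substituting $\rho$ together with $\eps_0 = \eps/(T+1)$ and $T = \mathrm{poly}(k,\log n)$, and accounting for the powers of $k$ contributed both by the iteration count and by the approximate-local-optimality bound for $k$-means, gives additive error $O\!\left(\frac{M^2 k^4}{\eps}\log^2 n\right)$, as claimed.

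\emph{Main obstacle.} The delicate part is the accuracy analysis rather than the privacy accounting: one must show the exponential-mechanism perturbation never stalls the geometric decrease of the potential, handle the regime where $\mathrm{OPT}$ is tiny so that only the additive term survives, and push the polynomial-in-$k$ blow-up through the $k$-means locality-gap argument --- which, owing to the squared distances and the pairing of local with optimal centers, is heavier than the $k$-median case of~\cite{GLMRT2010} --- precisely enough to land on $k^4$. Insisting on \emph{pure} $\eps$-DP via basic composition is exactly what forces $\eps_0$ to be polynomially small in $k$ and $\log n$, and hence what injects the extra $\log n$ and powers of $k$ into the additive error.
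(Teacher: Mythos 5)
This statement is an imported result: the paper does not prove it, but only cites \cite{balcanDLMZ17a} and remarks that it is obtained by adapting the private local-search framework of Gupta \etal~\cite{GLMRT2010} to the non-private $k$-means local search of Kanungo \etal~\cite{kanungo2004local}. Your sketch reconstructs exactly that route (exponential mechanism over swaps with sensitivity $M^2$, composition over $\mathrm{poly}(k,\log n)$ rounds, and an approximate-locality-gap analysis), so it is consistent with the approach the paper attributes to the cited work; just note that the precise constant $30$ and the $k^4$ in the additive error are asserted rather than derived in your sketch and would need the quantitative bookkeeping of \cite{balcanDLMZ17a} to pin down.
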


We use the algorithm from~\cite{balcanDLMZ17a} as our private black-box $k$-means clustering algorithm $\cA$. By plugging in the approximation guarantees of $\cA$ into our \lemmaref{metric-accuracy-mean-approx}, we obtain the following accuracy guarantee for our sublinear sampling algorithm $\cA'$. 
\begin{theorem}[Accuracy of $\cA'$]
\label{thm:metric-acc-k-mean-sub}
Let $(V,d)$ be a metric space of $n$ points with diameter $M$. Let $0< \theta < 1$, and $\eta>0$ be approximation parameters. For private set $D \subseteq V$,  and an $\epsilon$-DP $\left( 30,O\left((M^2 k^4/\epsilon)\cdot \log^2 n \right) \right)$-approximation $k$-means algorithm (from~\cite{balcanDLMZ17a}), that runs in time $T(n)$, we have a $\epsilon'$-DP algorithm $\cA'$ (as defined in \theoremref{privacy-approx-k-med}) that can draw a sample $S \subseteq D$ of size $s$, 
$$  s \geq  c \cdot \max \Bigl\{ \frac{ M^2 \ln(1/{\theta})}{\eta}, \frac{M^4}{\eta^2} \cdot \left( \ln(1/\theta)+k \ln n \right) \Bigr\} \;,  $$
where $c$ is a positive constant, and obtain a $k$-means clustering $C^*$ in time $T(s)$  such that with probability at least $1-\theta$, ${\sf cost^{mean}_{avg}}(S,C^*) \leq 30 {\sf mean_{avg}}(D,k) +O\left((M^2 k^4/\epsilon)\cdot \log^2 n \right)  + \eta$.
\end{theorem}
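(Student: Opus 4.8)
The plan is to compose three facts that are already in hand: the differentially-private black box of Theorem~\ref{balc2010}, the subsampling privacy bound of \theoremref{privacy-approx-k-med}, and the ``good solution'' sampling lemma \lemmaref{metric-good-mean-czumaj}. Instantiate the black box $\cA$ inside the generic scheme $\cA'$ as the $\eps$-DP $(30,\gamma)$-approximation $k$-means algorithm of Balcan \etal~\cite{balcanDLMZ17a}, with $\gamma = O\!\left((M^2 k^4/\eps)\log^2 n\right)$, run on a subsample $S$ of size $s$. Since $\cA$ is $(\eps,0)$-DP, \theoremref{privacy-approx-k-med} applied with $\delta = 0$ shows immediately that $\cA'$ is $(\eps',0)$-DP for the stated $\eps'$ (the additive term $\delta'$ there vanishes when $\delta = 0$). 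Hence only the accuracy bound remains.

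Since the claimed inequality bounds ${\sf cost^{mean}_{avg}}(S,C^*)$ --- the cost of the returned clustering \emph{on the drawn sample} --- it is exactly what \lemmaref{metric-good-mean-czumaj} delivers; the bad-solution lemma that contributes the $k\ln n$ term is not needed for this particular statement, and the sample size quoted merely has to be large enough to meet the single threshold below. (The corresponding bound on all of $D$, namely ${\sf cost^{mean}_{avg}}(D,C^*) \le 30\,{\sf mean_{avg}}(D,k) + \gamma + \eta$, is \lemmaref{metric-accuracy-mean-approx} specialized to $\alpha = 30$.) I would apply \lemmaref{metric-good-mean-czumaj} with $\alpha = 30$, $\gamma$ as above, and $\beta := \eta/{\sf mean_{avg}}(D,k)$; the cases ${\sf mean_{avg}}(D,k) = 0$ and $\eta \ge M^2$ are trivial (the black box returns cost $0$ on $S$ in the first, and every clustering has cost at most $M^2$ on $S$ in the second), so assume $0 < {\sf mean_{avg}}(D,k) \le M^2$ and $\eta < M^2$. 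Then $(\alpha+\beta)\,{\sf mean_{avg}}(D,k) = \alpha\,{\sf mean_{avg}}(D,k) + \eta$, so the lemma gives, with probability at least $1-\theta$,
\[ {\sf cost^{mean}_{avg}}(S,C^*) \;\le\; 30\,{\sf mean_{avg}}(D,k) + O\!\left((M^2 k^4/\eps)\log^2 n\right) + \eta, \]
as required.

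Finally I would check that the quoted $s$ satisfies the hypothesis $s \ge \frac{3M^2\alpha(\beta+\alpha)\ln(1/\theta)}{2\beta^2\,{\sf mean_{avg}}(D,k)}$ of \lemmaref{metric-good-mean-czumaj}: substituting $\beta = \eta/{\sf mean_{avg}}(D,k)$ turns the bound into $\frac{3M^2\alpha\ln(1/\theta)}{2\eta} + \frac{3M^2\alpha^2\,{\sf mean_{avg}}(D,k)\ln(1/\theta)}{2\eta^2}$, and ${\sf mean_{avg}}(D,k) \le M^2$ makes this at most $\frac{3M^2\alpha\ln(1/\theta)}{2\eta} + \frac{3M^4\alpha^2\ln(1/\theta)}{2\eta^2}$, which with $\alpha = 30$ is dominated by $c\cdot\max\{M^2\ln(1/\theta)/\eta,\ M^4(\ln(1/\theta)+k\ln n)/\eta^2\}$ for an appropriate absolute constant $c$ --- so the stated $s$ is admissible and no case analysis in ${\sf mean_{avg}}(D,k)$ is needed. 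The black box of Theorem~\ref{balc2010} succeeds only with probability $0.99$; intersecting that with the probability-$(1-\theta)$ event of \lemmaref{metric-good-mean-czumaj} and rescaling $\theta$ by an absolute factor restores the claimed $1-\theta$ bound. The only delicate point in all of this is the choice of $\beta$ --- taking $\beta\,{\sf mean_{avg}}(D,k)$ to be exactly $\eta$ is what converts the $(30+\beta)$ multiplicative slack of the sampling lemma into the additive $\eta$ of the statement --- together with the minor bookkeeping of folding the black box's constant failure probability into $\theta$; there is no substantial obstacle.
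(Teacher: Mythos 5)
Your proof establishes the inequality exactly as printed, but the printed statement is almost certainly a typo: the left-hand side should read ${\sf cost^{mean}_{avg}}(D,C^*)$, not ${\sf cost^{mean}_{avg}}(S,C^*)$. Three pieces of internal evidence: the $k$-median analogue (\theoremref{metric-acc-k-med-sub}) bounds the cost on $D$; the paper derives this theorem by ``plugging in'' $\alpha=30$ and $\gamma = O((M^2k^4/\eps)\log^2 n)$ into \lemmaref{metric-accuracy-mean-approx}, whose conclusion is a bound on ${\sf cost^{mean}_{avg}}(D,C^*)$; and the $k\ln n$ term in the quoted sample size exists only to pay for the union bound over $(\alpha+3\beta,\gamma)$-bad solutions in \lemmaref{metric-bad-mean-czumaj}, which is needed precisely to transfer the guarantee from $S$ back to $D$. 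By consciously discarding the bad-solution lemma you prove a statement that is both weaker and uninteresting --- a bound on the cost of $C^*$ over the subsample alone says nothing about the quality of the clustering on the data set, which is the entire point of the sublinear scheme. The paper's proof is the one-line composition of \theoremref{privacy-approx-k-med} (with $\delta=0$, as you correctly note) with \lemmaref{metric-accuracy-mean-approx} at $\alpha=30$; to match the intended theorem you must invoke that lemma (or reproduce its two-lemma argument), not just \lemmaref{metric-good-mean-czumaj}.

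That said, the mechanics you do carry out are sound and in one respect cleaner than the paper's. Choosing $\beta$ so that $\beta\,{\sf mean_{avg}}(D,k)$ (or $3\beta\,{\sf mean_{avg}}(D,k)$, if one tracks the $(\alpha+3\beta)$ slack of the full argument) equals $\eta$, and then using ${\sf mean_{avg}}(D,k)\le M^2$ to clear the lemma's sample-size threshold, avoids the explicit two-case split on ${\sf mean_{avg}}(D,k)\lessgtr\eta$ that the paper performs; the same substitution turns the bad-solution threshold $\frac{M^4}{2\beta^2({\sf mean_{avg}}(D,k))^2}(\ln(1/\theta)+k\ln n)$ into exactly the second term of the stated $s$, so your device extends to the full proof with no extra work. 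Two small corrections: when ${\sf mean_{avg}}(D,k)=0$ the black box returns a clustering of cost at most $\gamma$ on $S$ (not $0$), which still satisfies the claimed bound; and the $0.99$ success probability of the Balcan et al.\ black box cannot be absorbed into $1-\theta$ by ``rescaling $\theta$ by an absolute factor'' when $\theta<0.01$ --- one needs amplification of the black box itself. The paper silently ignores this last point as well.
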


\subsection{Private $k$-means clustering in Euclidean Space }
The extension of the $k$-means analysis to Euclidean space involves the same steps as outlined in \subsectionref{k-means-metric}, but similar to the analysis for $k$-median in Euclidean space, we need to consider $\eta$-nets to estimate the size of possible clusterings (see~\subsectionref{euclid-k-med}). The main lemma is presented below. 
\begin{lemma} \label{lem:euclid-accuracy-mean-approx}
For $D \subseteq \bbR^d$, assuming an $(\alpha,\gamma)$-approximation $k$-means algorithm that runs in time $T(n)$, we can draw a sample $S$ of size $s$,  $$s \geq  c \cdot \max \left\{ \frac{M^2 \alpha (1 +\alpha ) \ln(1/\theta)}{\eta},  \frac{M^4}{\eta^2} \cdot \left( \ln(1/\theta)+kd \ln \left(\frac{\sqrt{d}M}{2 \eta} \right)\right) \right\} \;,$$ where $c$ is a positive constant, and obtain a $k$-means clustering $C^*$ in time $T(s)$ such that with probability at least $1-\theta$, ${\sf cost^{mean}_{avg}}(S,C^*) \leq \alpha {\sf mean_{avg}}(D,k) + \gamma+ \eta$. 
\end{lemma}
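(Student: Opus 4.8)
The plan is to transcribe the Euclidean $k$-median argument (proof of \lemmaref{euclid-accuracy-med-czumaj}) with $d(\cdot,\cdot)$ replaced by $d(\cdot,\cdot)^2$, while exploiting that the conclusion here bounds the \emph{sample} cost ${\sf cost^{mean}_{avg}}(S,C^*)$ and not ${\sf cost^{mean}_{avg}}(D,C^*)$: only the ``good solution'' half of that argument is needed, which matches the $1-\theta$ (rather than $1-2\theta$) success probability in the statement, since no union bound over an $\eta$-net of bad clusterings is invoked. Concretely I would first prove the Euclidean analog of \lemmaref{metric-good-mean-czumaj}, and then remove the unknown quantity ${\sf mean_{avg}}(D,k)$ from the sample bound by the same two-case split used in the proof of \lemmaref{metric-accuracy-med-czumaj}.

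Step one: if $S$ is drawn i.u.r.\ from $D\subseteq\bbR^d$ with $|S|=s\geq 3M^2\alpha(\beta+\alpha)\ln(1/\theta)/(2\beta^2\,{\sf mean_{avg}}(D,k))$, then with probability $\geq 1-\theta$ the output $C^*$ of $\cA$ on $S$ satisfies ${\sf cost^{mean}_{avg}}(S,C^*)\leq(\alpha+\beta)\,{\sf mean_{avg}}(D,k)+\gamma$. This is verbatim the argument of \lemmaref{metric-good-mean-czumaj}: with $C_{OPT}$ an optimal $k$-means clustering of $D$ and $X_i=d^2(s_i,C_{OPT})\in[0,M^2]$ for the $i$-th sampled point, $\E[X_i]={\sf mean_{avg}}(D,k)$, so ${\sf cost^{mean}_{avg}}(S,C_{OPT})=\frac1s\sum_i X_i$ concentrates around ${\sf mean_{avg}}(D,k)$; a one-sided multiplicative Hoeffding/Chernoff bound for bounded variables (now of range $M^2$ rather than $M$, which is exactly what turns the $M,M^2$ of the $k$-median sample sizes into $M^2,M^4$) gives ${\sf cost^{mean}_{avg}}(S,C_{OPT})\leq(1+\beta/\alpha)\,{\sf mean_{avg}}(D,k)$ except with probability $\theta$ for this $s$; hence ${\sf mean_{avg}}(S,k)\leq{\sf cost^{mean}_{avg}}(S,C_{OPT})$, and since $\cA$ is an $(\alpha,\gamma)$-approximation on $S$, ${\sf cost^{mean}_{avg}}(S,C^*)\leq\alpha\,{\sf mean_{avg}}(S,k)+\gamma\leq(\alpha+\beta)\,{\sf mean_{avg}}(D,k)+\gamma$. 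The $\eta$-net of \lemmaref{bad-med-euclid-czumaj} plays no role here, because only the single fixed clustering $C_{OPT}$ is scored against $S$.

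Step two: choose $\beta$ so that $(\alpha+\beta)\,{\sf mean_{avg}}(D,k)+\gamma\leq\alpha\,{\sf mean_{avg}}(D,k)+\gamma+\eta$ and the induced constraint on $s$ falls below the stated max. Following the proof of \lemmaref{metric-accuracy-med-czumaj}, put $\beta^*:=\eta/(3\,{\sf mean_{avg}}(D,k))$: if ${\sf mean_{avg}}(D,k)<\eta$ then $\beta^*>1/3$, the additive slack $\beta^*{\sf mean_{avg}}(D,k)=\eta/3\leq\eta$, and substituting $\beta^*$ into the step-one sample size together with ${\sf mean_{avg}}(D,k)<\eta$ collapses it to $O(M^2\alpha(1+\alpha)\ln(1/\theta)/\eta)$, the first term of the max; if ${\sf mean_{avg}}(D,k)\geq\eta$ then $\beta^*\leq1/3$, the slack is again $\leq\eta$, and substituting with ${\sf mean_{avg}}(D,k)\leq M^2$ yields a constraint absorbed into $(M^4/\eta^2)\ln(1/\theta)$, hence into the second term. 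Either way the stated lower bound on $s$ exceeds what step one needs, so with probability $\geq 1-\theta$, ${\sf cost^{mean}_{avg}}(S,C^*)\leq\alpha\,{\sf mean_{avg}}(D,k)+\gamma+\eta$, with $C^*$ computed in time $T(s)$.

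The one genuinely delicate point — as in \lemmaref{metric-good-med-czumaj} — is getting the one-sided multiplicative tail bound right for a sum of independent $[0,M^2]$-valued variables: one must track the $\min\{\beta/\alpha,(\beta/\alpha)^2\}$ factor in the exponent and verify that, once $\beta^*$ is fixed by the casework, the relevant ratio lies in the regime producing the advertised $1/\eta$ (resp.\ $1/\eta^2$) dependence after ${\sf mean_{avg}}(D,k)$ is cleared. Everything else is the Euclidean $k$-median proof transcribed with squared distances; the $kd\ln(\sqrt d M/2\eta)$ term in the stated sample size is over-provisioned for this lemma — it is the $\eta$-net term that \emph{would} be used, via \lemmaref{bad-med-euclid-czumaj} transcribed with squared distances, to upgrade the conclusion to ${\sf cost^{mean}_{avg}}(D,C^*)\leq\alpha\,{\sf mean_{avg}}(D,k)+\gamma+\eta$ — but it does no harm since $s$ is constrained only from below.
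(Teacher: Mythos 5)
Your step one (the squared-distance transcription of \lemmaref{metric-good-med-czumaj}, with $X_i=d^2(\cdot,C_{OPT})\in[0,M^2]$ turning $M$ into $M^2$ in the Hoeffding exponent) and your step two (the two-case split on whether ${\sf mean_{avg}}(D,k)<\eta$, with $\beta^*=\eta/(3\,{\sf mean_{avg}}(D,k))$) are both right and are exactly the paper's machinery. The problem is how you resolved the $S$-versus-$D$ discrepancy in the conclusion. The ``$S$'' in ${\sf cost^{mean}_{avg}}(S,C^*)$ is a typo for ``$D$'': the metric $k$-means analogue (\lemmaref{metric-accuracy-mean-approx}) and the Euclidean $k$-median analogue (\lemmaref{euclid-accuracy-med-czumaj}) both state the guarantee on $D$; a bound on the cost of $C^*$ over the sample alone does not certify $C^*$ as a clustering of the full private dataset, which is the entire point of the sublinear scheme; and, decisively, the $kd\ln\bigl(\sqrt{d}M/(2\eta)\bigr)$ term you dismiss as ``over-provisioned'' sits in the sample complexity precisely because the intended proof needs it. The paper's own sketch says the Euclidean $k$-means argument is the metric $k$-means argument plus $\eta$-nets; i.e., it combines your step one with the squared-distance transcription of \lemmaref{bad-med-euclid-czumaj}: for each clustering $C_b$ in the set $\bbC$ of $(\alpha+3\beta^*,\gamma)$-bad solutions for $D$, a Hoeffding bound on $[0,M^2]$-valued variables (whence the $M^4/\eta^2$ scaling) shows ${\sf cost^{mean}_{avg}}(S,C_b)>(\alpha+\beta^*)\,{\sf mean_{avg}}(D,k)+\gamma$ except with probability $\exp\bigl(-2s\beta^2({\sf mean_{avg}}(D,k))^2/M^4\bigr)$, and the union bound over $\vert\bbC\vert\leq\bigl(\sqrt{d}M/(2\eta)\bigr)^{kd}$ costs exactly the $kd\ln\bigl(\sqrt{d}M/(2\eta)\bigr)$ term. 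Combining the two halves as in the proof of \lemmaref{metric-accuracy-med-czumaj} then yields ${\sf cost^{mean}_{avg}}(D,C^*)\leq(\alpha+3\beta^*)\,{\sf mean_{avg}}(D,k)+\gamma\leq\alpha\,{\sf mean_{avg}}(D,k)+\gamma+\eta$ with probability $1-2\theta$.

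Your supporting inference from the success probability is also off: \lemmaref{euclid-accuracy-med-czumaj} and \lemmaref{metric-accuracy-med-czumaj} likewise advertise $1-\theta$ while their proofs deliver $1-2\theta$ (the paper silently rescales $\theta$), so the ``$1-\theta$'' here is not evidence that only the good-solution half is invoked. In short: what you wrote is a valid proof of the literal, mis-stated claim, but it omits the half of the argument---the $\eta$-net union bound ruling out that $C^*$ is a bad solution for $D$---that the paper regards as the content of this lemma and that the stated sample complexity is built to pay for.
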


Note that the state-of-the-art non-private algorithm for $k$-means achieves an approximation ratio of $w=6.358$~\cite{ahmadian2019better}. We use the private $k$-means algorithm by Ghazi~\etal~\cite{Ghazi2020DifferentiallyPC} (See~\theoremref{ghazi2020}) as our black-box private $k$-means clustering algorithm $\cA$. By plugging in the approximation guarantees for $\cA$ to \lemmaref{euclid-accuracy-mean-approx} we obtain the following accuracy guarantees for the sampling algorithm $\cA'$ in both the pure approximate differential privacy setting. 

\begin{theorem}[Accuracy of $\cA'$ for pure and approximate DP] \sloppy
\label{thm:euclid-acc-k-mean-sub}
Let $\eta>0,\ 0<\theta<1,$ constant $\alpha$ be approximation parameters, along with approximation ratio $w$. For private set $D \subseteq \bbR^d$ with diameter $M$, and an $\epsilon$-DP $\Bigl(w(1+\alpha), O \Bigl( \Bigl(\frac{kd+k^{O(1)}}{\epsilon} \Bigr)  \text{poly log }n \Bigr) \Bigr)$-approximation $k$-means algorithm (from~\cite{Ghazi2020DifferentiallyPC}), that runs in time $k^{O(1)}\text{poly}(nd)$, we have an $\eps'$-DP algorithm $\cA'$ that can draw a sample $S \subseteq D$ of size $s$, 
$$s \geq  c \cdot \max \Bigl\{ \frac{M^2 w(1+\alpha) (1 +w(1+\alpha) ) \ln(1/\theta)}{\eta},  \\ \frac{M^4}{\eta^2} \cdot \left( \ln(1/\theta)+kd \ln \left(\frac{\sqrt{d}M}{2 \eta} \right)\right) \Bigr\} \;,   $$ 
where $c$ is an appropriate constant and obtain a $k$-means clustering $C^*$ in time $k^{O(1)}\text{poly}(s d)$  such that with probability at least $1-\theta$, ${\sf cost^{mean}_{avg}}(S,C^*) \leq w(1+\alpha) {\sf mean_{avg}(D,k)} +O \Bigl( \Bigl(\frac{kd+k^{O(1)}}{\epsilon} \Bigr)  \text{poly log }n  \Bigr) + \eta$.

Moreover, by using the $(\eps,\delta)$-DP algorithm from~\cite{Ghazi2020DifferentiallyPC} with the same runtime and approximation ratio but with additive error $\gamma':=O \Bigl(\Bigl( \frac{k\sqrt{d}}{\epsilon} \cdot \text{poly log}\Bigl(\frac{k}{\delta}\Bigr) \Bigr) + \Bigl(\frac{k^{O(1)}}{\epsilon} \cdot \text{poly log}n \Bigr)  \Bigr)$, we obtain a $(\epsilon',\delta')$-DP algorithm $\cA'$ that draws a sample of the same size, and obtains a $k$-means clustering such that with probability at least $1-\theta$, ${\sf cost^{mean}_{avg}}(S,C^*) \leq w(1+\alpha){\sf mean_{avg}}(D,k)+\gamma' + \eta$. Privacy parameters $\epsilon',\delta'$ are as defined in \theoremref{privacy-approx-k-med}.
\end{theorem}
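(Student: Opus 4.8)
The plan is to derive the theorem as a direct corollary of \lemmaref{euclid-accuracy-mean-approx} and \theoremref{ghazi2020}, in exact parallel to the way \theoremref{euclid-acc-k-med-sub} follows from \lemmaref{euclid-accuracy-med-czumaj} in \subsectionref{euclid-k-med}. I would instantiate the black-box $(\alpha,\gamma)$-approximation $k$-means algorithm $\cA$ used inside the generic scheme $\cA'$ with the $\eps$-DP algorithm of Ghazi~\etal: by \theoremref{ghazi2020} this is a $\bigl(w(1+\alpha),\,\gamma\bigr)$-approximation with $\gamma = O\bigl((\tfrac{kd+k^{O(1)}}{\eps})\,\mathrm{poly}\log n\bigr)$, and it runs in time $k^{O(1)}\mathrm{poly}(nd)$ on an input of $n$ points.

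First I would substitute $\alpha \mapsto w(1+\alpha)$ into the sample-size requirement of \lemmaref{euclid-accuracy-mean-approx}: the term $M^2\alpha(1+\alpha)$ becomes $M^2\,w(1+\alpha)\,(1+w(1+\alpha))$ while the second term $\tfrac{M^4}{\eta^2}\bigl(\ln(1/\theta)+kd\ln(\sqrt d M/2\eta)\bigr)$ is unchanged, which is precisely the claimed bound on $s$. \lemmaref{euclid-accuracy-mean-approx} then guarantees that running $\cA$ on the subsample $S$ returns a clustering $C^*$ with ${\sf cost^{mean}_{avg}}(S,C^*)\le w(1+\alpha)\,{\sf mean_{avg}}(D,k)+\gamma+\eta$ with probability at least $1-\theta$. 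Since $\cA$ runs in time $k^{O(1)}\mathrm{poly}(nd)$ on $n$ points, run on $|S|=s$ points it runs in time $k^{O(1)}\mathrm{poly}(sd)$, which gives the stated running time of $\cA'$.

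For privacy I would invoke \theoremref{privacy-approx-k-med}: since $\cA$ is $(\eps,0)$-DP, the subsampling algorithm $\cA'$ is $(\eps',0)$-DP with $\eps'$ the amplification-by-subsampling parameter defined there. The second, approximate-DP half of the statement is obtained by repeating the argument verbatim with the $(\eps,\delta)$-DP variant of Ghazi~\etal, whose multiplicative ratio and runtime are the same and whose additive error is the quantity $\gamma'$ displayed in the theorem; \lemmaref{euclid-accuracy-mean-approx} then yields the accuracy bound with $\gamma$ replaced by $\gamma'$, and \theoremref{privacy-approx-k-med} yields $(\eps',\delta')$-DP.

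The content is entirely in \lemmaref{euclid-accuracy-mean-approx} and the imported results, so there is no genuinely hard step; the only point requiring care is the probability bookkeeping, since \theoremref{ghazi2020} outputs a valid $(w(1+\alpha),\gamma)$-approximation only with probability $0.99$ whereas \lemmaref{euclid-accuracy-mean-approx} treats $\cA$ as a deterministic approximation oracle. I would resolve this by conditioning on the event (of probability at least $0.99$) that $\cA$'s output is a valid approximation of the subsampled instance, applying the lemma on that event, and absorbing the additional $0.01$ failure probability into the $1-\theta$ guarantee (equivalently, rescaling $\theta$ by a constant, which only affects the constant $c$ in the sample bound). I would also remark that the use of $\eta$-nets to bound $|\bbC|\le(\sqrt d M/2\eta)^{kd}$ and the removal of the dependence on ${\sf mean_{avg}}(D,k)$ via the case split ${\sf mean_{avg}}(D,k)<\eta$ versus $\ge\eta$ are already carried out inside \lemmaref{euclid-accuracy-mean-approx}, so nothing new is needed in the Euclidean setting beyond what was done for $k$-median.
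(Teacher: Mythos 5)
Your proposal matches the paper's own derivation exactly: the paper obtains \theoremref{euclid-acc-k-mean-sub} simply by instantiating the black box in \lemmaref{euclid-accuracy-mean-approx} with the algorithm of \theoremref{ghazi2020} (substituting $\alpha \mapsto w(1+\alpha)$ and $\gamma$ the stated additive error, which yields the displayed sample size and cost bound) and invoking \theoremref{privacy-approx-k-med} for the privacy parameters, just as you do. Your extra care about the $0.99$ success probability of the black box addresses a point the paper silently glosses over, though note that this constant failure probability cannot literally be absorbed into an arbitrary $\theta$ by rescaling $c$ (it does not shrink with the sample size), so the honest guarantee is success probability $1-\theta-0.01$ rather than $1-\theta$.
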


\section{Group Privacy in Sublinear setting}



In this section, we give a group privacy result that holds for \emph{any} sampling algorithm $\cA'(D)$ that samples a set $S$ from the input set $D$ by independently sampling with probability $\xi$ and runs an $\epsilon$-DP algorithm $\cA$ on $S$.
Let $D'$ be a set that differs on $g$ elements with respect to $D$, and $0 \leq T \leq g$ be a threshold. Define 
$\delta_{T,\xi,g}:=1-\sum^T_{j=0} {\binom{g}{j}} \xi^j (1-\xi)^{g-j} \;,$
in other words, $\delta_{T,\xi,g}$ is the probability of choosing more than $T$ elements that differ from elements in $D'$ in the sample $S$.  

Given that $\cA$ is $\epsilon$-DP, we have already shown that $\cA'$ is $\epsilon'$-DP (see \theoremref{privacy-approx-k-med}). In the following theorem, we show that $\cA'$ also gives us better group privacy guarantees. 
\begin{theorem}
If $\cA'$ is an $\epsilon'$-DP sampling algorithm (as described above) then it gives $(T \cdot \epsilon', \delta_{T,\xi,g})$-privacy for groups of size $g$, where $\delta_{T,\xi,g}:=1-\sum^T_{j=0} {\binom{g}{j}} \xi^j (1-\xi)^{g-j}$. 
\end{theorem}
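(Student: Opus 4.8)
The plan is to decompose the output distribution of $\cA'$ according to how many of the $g$ differing elements land in the random subsample, and to apply the $(\eps',0)$-guarantee of $\cA'$ only on the high-probability event that at most $T$ of them are sampled. Concretely, fix neighboring-at-distance-$g$ datasets $D$ and $D'$ and an arbitrary measurable output set $C$. Let $J$ (resp. $J'$) denote the number of the $g$ differing elements that appear in the sample $S$ drawn from $D$ (resp. $S'$ drawn from $D'$). Since each element is included independently with probability $\xi$, the law of $J$ is $\mathrm{Binomial}(g,\xi)$, so $\Pr[J > T] = \delta_{T,\xi,g}$, and likewise for $J'$. First I would condition on the value $j$ of $J$: on the event $\{J = j\}$, the sample $S$ differs from some coupled sample drawn from $D'$ by exactly $j$ elements, so by the standard group-privacy consequence of $\eps'$-DP of $\cA'$ (which itself follows from $\cA$ being $\eps$-DP via \theoremref{privacy-approx-k-med}) we get $\Pr[\cA'(D)\in C \mid J = j] \le e^{j\eps'}\Pr[\cA'(D')\in C \mid \text{coupled}]$ — but more cleanly, I would argue directly that for any two samples differing in $j \le T$ points, iterating the $\eps'$-DP bound $j$ times gives a factor $e^{j\eps'} \le e^{T\eps'}$.

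The key steps, in order: (1) set up the coupling between the sampling process on $D$ and on $D'$ so that on the common $g-$ "non-differing" elements the two samples agree, and the only discrepancy is which of the $g$ differing elements are included; (2) write $\Pr[\cA'(D) \in C] = \sum_{j=0}^{g} \Pr[J=j]\,\Pr[\cA'(D)\in C\mid J=j]$ and split the sum at $T$; (3) for $j \le T$, bound $\Pr[\cA'(D)\in C \mid J=j] \le e^{T\eps'}\Pr[\cA'(D')\in C\mid J'=j]$ by composing the one-element $\eps'$-DP guarantee at most $T$ times along a path of intermediate datasets that inserts/deletes one differing element at a time; (4) for $j > T$, bound the contribution crudely by $\Pr[J > T] = \delta_{T,\xi,g}$; (5) recombine: $\Pr[\cA'(D)\in C] \le e^{T\eps'}\sum_{j\le T}\Pr[J'=j]\Pr[\cA'(D')\in C\mid J'=j] + \delta_{T,\xi,g} \le e^{T\eps'}\Pr[\cA'(D')\in C] + \delta_{T,\xi,g}$, which is exactly the $(T\eps',\delta_{T,\xi,g})$-DP statement. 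By symmetry the reverse inequality holds as well.

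The main obstacle I anticipate is making the conditioning in step (3) rigorous: one must be careful that "applying $\eps'$-DP $j$ times" is legitimate, i.e. that each intermediate dataset along the path is neighboring to the next and that the $\eps'$-DP guarantee of $\cA'$ as proved in \theoremref{privacy-approx-k-med} applies verbatim to each such pair. A subtlety is whether the conditioning on $\{J = j\}$ changes the distribution of the remaining (non-differing) part of the sample — under independent sampling it does not, since the inclusion indicators are mutually independent, so conditioning on the count of differing elements sampled leaves the non-differing part distributed exactly as an unconditioned subsample; I would state this independence explicitly. A secondary point is to handle the $\delta$ term cleanly when $\cA$ is only $(\eps,\delta)$-DP rather than pure $\eps$-DP; for the stated theorem $\cA$ is pure $\eps$-DP so $\cA'$ is pure $\eps'$-DP and no extra bookkeeping is needed, but I would remark that the argument extends with an additional additive $T\delta'$-type term in the approximate-DP case. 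Everything else is the routine binomial-tail identity defining $\delta_{T,\xi,g}$ and the telescoping of exponentials.
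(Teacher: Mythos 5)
Your overall strategy is the same as the paper's: decompose the output probability according to the number $Y$ of the $g$ differing elements that land in the subsample, split the sum at the threshold $T$, bound the tail crudely by $\Pr[Y>T]=\delta_{T,\xi,g}$, and apply a per-element group-privacy bound to the head terms before recombining. Your steps (1), (2), (4), (5), and the observation that conditioning on the count of sampled differing elements leaves the common part of the sample distributed as an unconditioned subsample, all match the paper's argument.

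The genuine gap is step (3), where you claim the factor $e^{j\eps'}$ by ``iterating the $\eps'$-DP bound of $\cA'$'' on the conditional distributions. Once you condition on $J=j$ (or on which $j$ differing elements were included), the inclusion of those elements is no longer random, and it is exactly the randomness of their inclusion that produces the amplification from $\eps$ to $\eps'$ in \theoremref{privacy-approx-k-med}; that theorem bounds the \emph{unconditional} output distributions of $\cA'$ on neighboring datasets and does not transfer to $\Pr[\cA'(D)\in C\mid J=j]$. After conditioning, the comparison is between runs of the inner algorithm $\cA$ on two samples differing in $j$ points (averaged over the common subsample), so the legitimate per-element factor is $e^{\eps}$ (group privacy of $\cA$), not $e^{\eps'}$. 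A one-element example shows the conditional ratio really can be $e^{\eps}$: let $\cA(S)$ output $1$ with probability $qe^{\eps}$ if the differing point is in $S$ and with probability $q$ otherwise; conditioned on that point being sampled, the ratio of conditional probabilities is $e^{\eps}\gg e^{\eps'}$, so the claimed inequality in step (3) fails. This is precisely where the paper is more modest: its proof bounds each head term by $e^{\eps\cdot i}\Pr[\cA'(D')\in C]$ and concludes $\Pr[\cA'(D)\in C]\le e^{\eps T}\Pr[\cA'(D')\in C]+\delta_{T,\xi,g}$, i.e., it actually establishes $(T\eps,\delta_{T,\xi,g})$-group privacy (consistent with the introduction's claim), the $\eps'$ in the theorem statement notwithstanding. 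Replacing $e^{j\eps'}$ by $e^{j\eps}$ in your step (3) repairs the argument and recovers exactly the paper's proof; obtaining the stronger $T\eps'$ dependence by this route is not possible.
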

\begin{proof}
Consider two sets $D$ and $D'$ that differ on $g$ elements, \ie, $\vert D \vert = \vert D' \vert +g$ and set $S \subseteq D$ sampled independently w.p. $\xi$. 
Define the random variable $Y$ to be the number of elements in $S$ sampled from the $g$ differing elements. Fix an output set $C$ in the output space of $\cA'$. Then 
\begin{align*}
    &\Pr[\cA'(D) \in C] \\&=\sum^g_{i=0} \Pr[\cA'(D) \in C, Y=i] \\ 
    &= \sum^g_{i=0} \Pr[\cA'(D) \in C \vert Y=i] \Pr[Y=i] \\
    &=  \sum^T_{i=0} \Pr[\cA'(D) \in C \vert Y=i] \Pr[Y=i] 
    +\sum^g_{i=T+1} \Pr[\cA'(D) \in C \vert Y=i] \Pr[Y=i]\\
    &\text{Applying the naive group privacy bound for each term }\Pr[\cA'(D) \in C \vert Y=i]\text{ in the first sum,}\\
    &\leq \sum^T_{i=0} e^{\epsilon \cdot i} \Pr[\cA'(D') \in C] \Pr[Y=i] +\sum^g_{i=T+1} \Pr[\cA'(D) \in C \vert Y=i] \Pr[Y=i]
\end{align*}
Observe that $ \sum^g_{i=T+1} \Pr[\cA'(D) \in C \vert Y=i] \Pr[Y=i] \leq  \sum^g_{i=T+1} \Pr[Y=i] \leq \delta_{T,\xi,g}$, therefore, 
\begin{align*}
    \Pr[\cA'(D) \in C] \leq e^{\epsilon \cdot T} \Pr[\cA'(D') \in C] + \delta_{T,\xi,g} \;.
\end{align*}
\end{proof}


We demonstrate how in many instances, our sampling algorithm $\cA'$ achieves better group privacy guarantees for chosen $\xi$ and $T$ such that $T\ll g$. 
    (1) If we sample each element of the input set with probability $\xi=1/\sqrt{g}$, and set threshold $T=2\sqrt{g}$, then $\cA'$ is $(2\sqrt{g}\epsilon',\delta_{T,\xi,g})$ for $\delta_{T,\xi,g}$ negligible in $g$. 
    (2) If we sample each element of the input set with probability $\xi=1/\log({g})$, and set threshold $T=2g/\log(g)$, then $\cA'$ is $((2g/\log( g))\epsilon',\delta_{T,\xi,g})$ for $\delta_{T,\xi,g}$ negligible in $g$. 


\section{Acknowledgements}
Elena would like to thank  Marek Elias, Michael Kapralov  and Aida Mousavifar for initial discussions on this topic while she was visiting EPFL. She also thanks her EPFL hosts for their hospitality.

\newpage
\bibliographystyle{plain}
\bibliography{references}
\end{document}